\newcommand{\thickhline}{
    \noalign {\ifnum 0=`}\fi \hrule height 1pt
    \futurelet \reserved@a \@xhline
}
\newcolumntype{"}{@{\hskip\tabcolsep\vrule width 1pt\hskip\tabcolsep}}
\theoremstyle{definition}
\newtheorem{definition}{Definition}
\newtheorem{theorem}{Theorem}
\newtheorem{lemma}[theorem]{Lemma}
\newtheorem{remark}{Remark}
\newtheorem{corollary}{Corollary}
\newtheorem{proposition}{Proposition}
\newcommand\independent{\protect\mathpalette{\protect\independenT}{\perp}}
\def\independenT#1#2{\mathrel{\rlap{$#1#2$}\mkern2mu{#1#2}}}
\newcommand\footnoteref[1]{\protected@xdef\@thefnmark{\ref{#1}}\@footnotemark}
\titleformat*{\section}{\large\bfseries}
\titleformat*{\subsection}{\large\bfseries}
\titleformat*{\subsubsection}{\bfseries}
\titleformat*{\paragraph}{\bfseries}
\titleformat*{\subparagraph}{\bfseries}
\numberwithin{equation}{section}
\theoremstyle{plain}
\title{Information Theoretic Measures for Fairness-aware Feature Selection} 
\author{
{\normalsize Sajad Khodadadian}\footnote{H. Milton Stewart School of Industrial \& Systems Engineering, Georgia Institute of Technology, \href{mailto:skhodadadian3@gatech.edu}{skhodadadian3@gatech.edu}}\and
{\normalsize Mohamed Nafea}\footnote{Department of Electrical Engineering and Computer Science, University Of Detroit Mercy, \href{mailto:nafeamo@udmercy.edu}{nafeamo@udmercy.edu}}\and
{\normalsize AmirEmad Ghassami}\footnote{Department of Computer Science, Johns Hopkins University, \href{mailto:aghassa1@jhu.edu}{aghassa1@jhu.edu}}\and
{\normalsize Negar Kiyavash}\footnote{College of Management of Technology, {\'E}cole Polytechnique F{\'e}d{\'e}rale de Lausanne (EPFL), \href{mailto:negar.kiyavash@epfl.ch}{negar.kiyavash@epfl.ch}}
}
\begin{document}

\maketitle

\begin{abstract}
Machine learning algorithms are increasingly used for consequential decision making regarding individuals based on their relevant features. Features that are relevant for accurate decisions may however lead to either explicit or implicit forms of discrimination against unprivileged groups, such as those of certain race or gender. This happens due to existing biases in the training data, which are often replicated or even exacerbated by the learning algorithm. Identifying and measuring these biases at the data level is a challenging problem due to the interdependence among the features, and the decision outcome. In this work, we develop a framework for fairness-aware feature selection which takes into account the correlation among the features and the decision outcome, and is based on information theoretic measures for the accuracy and discriminatory impacts of features. In particular, we first propose information theoretic measures which quantify the impact of different subsets of features on the accuracy and discrimination of the decision outcomes. We then deduce the marginal impact of each feature using Shapley value function; a solution concept in cooperative game theory used to estimate marginal contributions of players in a coalitional game. Finally, we design a fairness utility score for each feature (for feature selection) which quantifies how this feature influences accurate as well as nondiscriminatory decisions. Our framework depends on the joint statistics of the data rather than a particular classifier design. We examine our proposed framework on real and synthetic data to evaluate its performance.
\end{abstract}

\section{Introduction}\label{sec:intro}
Machine learning algorithms are increasingly utilized in many domains of human life such as advertising, healthcare, loan assessment, job applications, and predictive policing  \cite{kinyanjui2019estimating,mahoney2007method,angwin2016machine}. While learning algorithms may help improve prediction accuracy in these applications, there is a growing concern about potential discriminatory practices against underrepresented groups. These practices often result due to existing biases in training data which are replicated (or even exacerbated) by the learning algorithm. Due to legal and/or ethical implications of these decisions\footnote{As a legal example, Title VII of the Civil Rights Act of 1964 prohibits employers from discriminating against employees on the basis of features such as race, gender, religion, etc.}, mere improvements in reducing risk or decision error does not suffice, and the system should also take into account the potential discriminatory consequences of a decision. 

We consider the fairness problem that arises in supervised learning. In this setting, the goal is to assign an accurate label (decision) to each individual based on a set of features she/he possesses, while hindering certain features, referred to as \emph{protected attributes}, from influencing the decisions, \cite{calmon2017optimized,ghassami2018fairness,hardt2016equality,zemel2013learning, jiang2020identifying}. The law recognizes two doctrines of discrimination: (1) Disparate treatment (explicit discrimination) where the protected attribute (group membership) is deliberately used to treat underrepresented groups differently, and (2) Disparate impact (implicit discrimination) where the protected attribute is hidden from the decision maker but other features act as proxies for the group membership, unfavoring those of certain demographic \cite{hardt2016equality,zemel2013learning,barocas2016big,feldman2015certifying,zafar2015fairness}). For example, redlining, a systematic denial of some services such as banking, insurance, or healthcare to residents of certain neighborhoods, was adopted for decades in certain urban areas in the United States; here, the zip code acts as a proxy for the individual's race  \cite{hunt2005redlining}.

A candid approach to alleviate discriminatory decisions is to identify features with significant proxy behaviour and prevent these features from influencing the decision outcome, see for example \cite{wang2018direction}. However, a feature that acts as a proxy for the sensitive attributes(s) may contain information necessary for accurate classification. The question then is how to simultaneously quantify the marginal impact of each feature both on the accuracy and discrimination of the system? By answering this question, one can choose to select the subset of features that optimally satisfies certain accuracy/fairness requirements. 

In this paper, we propose two information-theoretic measures that separately quantify the accuracy and discriminatory impact of subsets of features. Subsequently, we deduce the marginal impacts of each feature using Shapely-value analysis \cite{shapley1953value}. Introducing the measures for subsets of features, rather than for individual features, allows us to account for the interdependencies among the features, sensitive attributes, and decision outcome. Finally, we define a fairness-utility score for each feature which combines both impacts for each feature. Using these scores, we can choose to select the subset of features that optimally satisfy certain accuracy/fairness requirements.

Information-theoretic measures are advantageous because they (i) capture non-linear dependencies among the system variables \cite{cover2012elements} and (ii) allow the system variables, particularly the protected attribute(s) and the true outcome, to have arbitrary cardinalities (as opposed to \cite{hardt2016equality,zemel2013learning}). Further, our measures are defined with respect to the features, the sensitive attributes, and the true outcome. Thus, our method only requires knowledge of their joint statistics, and is not limited to a particular choice of the classifier. 

\subsection{Related work}\label{related}
{\bf{Fairness.}} The problem of algorithmic fairness and various notions for nondiscriminative learning, both in the context of supervised learning \cite{dwork2012fairness,feldman2015certifying,hardt2016equality}, and otherwise \cite{joseph2016rawlsian,joseph2016fairness}, have been the focus of recent work. Several methods have been proposed to address disparate impact (implicit discrimination), see \cite{kamishima2011fairness,dwork2012fairness,zemel2013learning,feldman2015certifying,hardt2016equality,kilbertus2017avoiding, wang2019repairing}. These references adopted different notions of fairness, and proposed algorithms to mitigate unwanted biases. Notions of fairness can be coarsely divided into (i) {\it{group fairness}} which typically fixes some protected attributes (group memberships) and requires parity for a certain statistical measure across these groups \cite{hardt2016equality,kleinberg2016inherent,kusner2017counterfactual}, and (ii) {\it{individual fairness}}, which requires that similar individuals should be treated similarly, for certain similarity measures between individuals, as well as between classification outcomes \cite{dwork2012fairness,zemel2013learning,joseph2016fairness}. 
Furthermore, approaches for mitigating discrimination bias can be generally categorized into (i) pre-processing methods, which modify the distribution of the training data, see \cite{kamiran2012data,hajian2013methodology,feldman2015certifying,calmon2017optimized}; (ii) in-processing methods, which modify the cost function or the constraints of the learning algorithm, see \cite{calders2010three,fish2016confidence,kamishima2011fairness,zafar2017fairness}; (iii) post-processing methods, which modify the prediction outcome, see \cite{hardt2016equality,pedreschi2009measuring}, and finally, (iv) causal reasoning, which introduces the concepts of counterfactual and interventional fairness, see \cite{kilbertus2017avoiding,kusner2017counterfactual, pearl2009causality}. This work adopts a group fairness approach, and classifies as a pre-processing method for bias mitigation. Specifically, we introduce a fairness-utility score for each (non-sensitive) feature, based on information theoretic measures, and use these scores for fairness-aware feature selection: Features with high discriminatory and low accuracy impact shall be removed. Further, we relate our proposed measure for the discriminatory impact of features to the existing notions of group fairness such as statistical parity and equalized odds (see Section \ref{sec:discrimination_effect}). Finally, we provide causal reasoning for the desired properties of the measures we construct for the accuracy and discriminatory impacts of features.\\ \\
{\bf{Fairness-aware feature selection.}} Feature selection under fairness constraints has been previously studied by a few works. \cite{grgic2016case} proposed a feature selection criteria based on the populations' consensus on whether features are unfair to use, and investigated the impact of removing these features on prediction accuracy. In \cite{kazemi2018scalable}, the authors considered a submodular robust optimization problem, in which a set function is maximized under a set-size constraint, despite deletions in data points. The authors examined their proposed algorithms on a feature selection setting where the sensitive features (protected attributes) are deleted and the objective function is the mutual information between subsets of features and the true label. In \cite{ghassami2018fairness}, the authors proposed an optimization framework where the objective is to find the optimal compressed version of the features which maximizes the mutual information with the true label, while their mutual information with the protected attribute, given the true label, is upper bounded. In \cite{galhotra2020fair}, the authors considered the problem of integrating new features to an existing training dataset, and proposed an algorithm to identify the new features that can be added to the original training data while still ensuring interventional fairness.\\ \\
{\bf{Transparency and Shapely value function.}} There is a growing interest in transparency and interpretability analysis of machine learning systems \cite{datta2016algorithmic, doshi2017towards,goodman2017european}. In \cite{datta2016algorithmic}, the authors proposed a framework for transparency analysis which is easy to interpret for humans. This is achieved by quantifying the individual impacts of features on various metrics of a decision maker, such as decision error, risk, or discrimination. Our measures for accuracy and discriminatory impacts can also be viewed as measures for transparency. Besides, there has been a surge in using Shapely value for transparent machine learning, i.e., quantifying the importance of features for accurate and/or fair decisions of a given classifier \cite{datta2016algorithmic, lundberg2017unified, mase2021cohort}. In addition, a few work proposed using Shapely value for feature selection, without fairness considerations, see \cite{cohen2005feature,sun2012using}. In this work, we also use Shapely value function to deduce marginal impacts of features, while accounting for their interdependencies. However, our main contribution in this work lies in constructing {\it{information theoretic}} measures for the accuracy and discriminatory impacts of features, which do not depend on a certain choice of the classifier.\\ \\
{\bf{Information theoretic measures for fairness.}} Another related work is
\cite{dutta2020information} which proposed using information theoretic tools to measure the non-exempt and exempt components of discrimination: The non-exempt discrimination component quantifies the part of discrimination that cannot be accounted for by the features critical for accuracy, and the exempt component quantifies the remaining part of discrimination. In \cite{alghamdi2020model}, the authors proposed an information theoretic formulation for ``model projection'', where a reference probablistic classifier is projected to the set of classifiers that satisfy certain fairness criteria.

\subsection{Challenges and our contributions}
Our goal is to develop a framework for fairness-aware feature selection by computing a score for each feature which captures both its accuracy and discriminatory impacts. Thus, we need to define measures which (i) precisely quantify the accuracy and discriminatory impacts of a feature (or a subset of features), and (ii) capture the inter-dependencies among the features, the protected attribute, and the true outcome. These measures depend only on the joint statistic of the data and not on the particular classifier at hand. 

We tackle these challenges by proposing information-theoretic accuracy and discrimination measures that depend only on the joint distribution of the data. In order to account for the {\it{correlation}} among the model variables, these measures are defined on subsets of features rather than a single feature. As our ultimate goal is to estimate the marginal impact of each feature, we propose extracting a score for each feature using the so-called Shapley value \cite{shapley1953value}, a concept from cooperative game theory that allows assigning values to quantify an individual's contribution to the game. Our measures are based on a bivariate decomposition of mutual information \cite{bertschinger2014quantifying}, in order to precisely quantify the accuracy and discriminatory impacts.

The remainder of this paper is organized as follows. Section \ref{sec:accdesmeasures} introduces accuracy and discrimination measures, defined for subsets of features. In Section \ref{sec:agg}, we compute the aggregate discriminatory and accuracy impact of each feature. Section \ref{sec:exp} discusses experimental results. All the proofs are deferred to the appendix. 

\section{Accuracy and discrimination measures}\label{sec:accdesmeasures}
As mentioned earlier, we seek information-theoretic measures. But generic measures such as mutual information are not suitable for estimating the accuracy and discriminatory effects of a subset of features. We clarify this in Sections \ref{sec:accuracy_effect} and \ref{sec:discrimination_effect}. Instead, we propose information-theoretic measures based on a bivariate decomposition of mutual information \cite{bertschinger2014quantifying}, which are indeed able to meet our requirements.

\subsection{Bivariate decomposition of information}\label{sec:bivariate}

\begin{figure}
    \centering
    \includegraphics[width=0.45\linewidth]{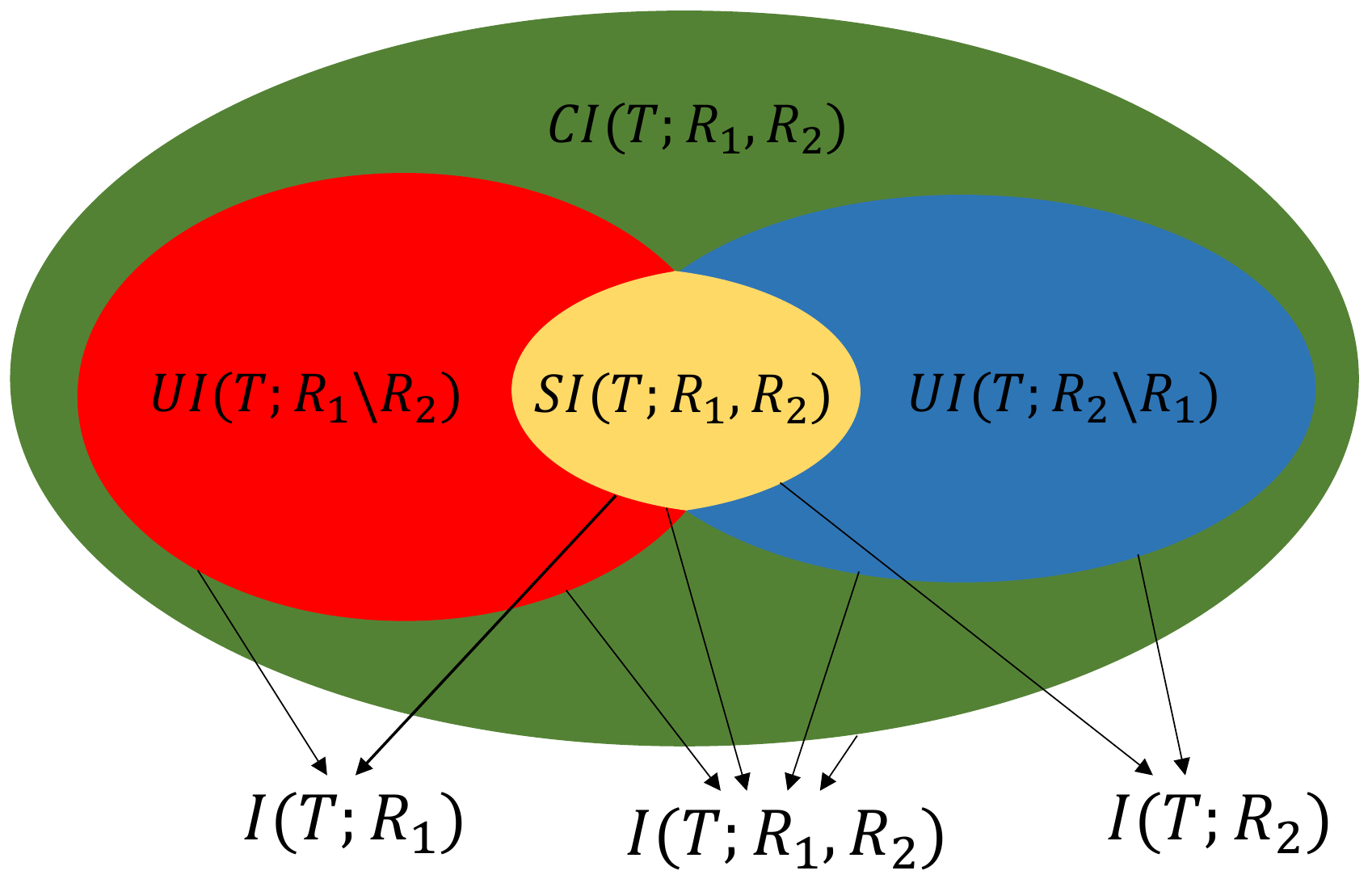}
    \caption{Decomposition of Information.}
    \label{fig:info_decompose}
\end{figure}

Consider an arbitrary triple of random variables $R_1$, $R_2$, and $T$, and let $P_{T,R_1,R_2}$ denote their joint probability distribution. The amount of information $R_1$ and $R_2$ contain about $T$ is measured by the mutual information $I(T;R_1,R_2)$. \cite{bertschinger2014quantifying} proposed a non-negative decomposition of $I(T;R_1,R_2)$ into unique, shared, and synergistic information; see Figure \ref{fig:info_decompose}. The unique information of $R_1$ with respect to $T$, denoted by $UI(T;R_1\setminus R_2)$, represents the information content related to $T$ that is only available in $R_1$. The shared information of $R_1$ and $R_2$, denoted by $SI(T;R_1,R_2)$, represents the information content related to $T$ that both $R_1$ and $R_2$ possess. Finally, the synergistic information of $R_1$ and $R_2$, denoted by $CI(T;R_1,R_2)$, represents the information content that can be obtained only if both $R_1$ and $R_2$ are available. For instance, in a cryptographic system, both the secret key and the cypher text are needed to decode the encrypted message. 
According to Figure \ref{fig:info_decompose}, this decomposition of mutual information satisfies the following:
\begin{align}
&I(T;R_1,R_2) = UI(T;R_1\backslash R_2)+UI(T;R_2\backslash R_1) + SI(T;R_1,R_2) + CI(T;R_1,R_2),\label{eq:I}\\
&I(T;R_i) = UI(T;R_i\backslash R_j)  + SI(T;R_1,R_2), ~ i\neq j, ~i,j\in\{1,2\}\label{eq:I_1}. 
\end{align}
By defining a measure for either unique, shared, or synergistic information, and using the decomposition in (\ref{eq:I}), (\ref{eq:I_1}), the other two quantities are well defined. We use the following measure for unique information  \cite{bertschinger2014quantifying}:
\begin{equation}\label{eq:UI}
UI(T;R_1\backslash R_2) = \min_{Q\in \Delta_P} I_Q(T;R_1|R_2),
\end{equation}
where $I_Q(T;R_1|R_2) = \sum_{t,r_1,r_2} Q_{T,R_1,R_2}(t,r_1,r_2)\log\frac{Q_{T|R_1,R_2}(t|r_1,r_2)}{Q_{T|R_2}(t|r_2)}$ is the conditional mutual information between $T$ and $R_1$ given $R_2$, calculated with respect to the joint probability distribution $Q\in \Delta_P$.  $\Delta_P=\{Q\in\Delta: Q_{T,R_1} = P_{T,R_1}, Q_{T,R_2} = P_{T,R_2} \}$, where $\Delta$ is the simplex of joint probability distributions over $T$, $R_1$, $R_2$. 

Using the unique information definition in \eqref{eq:UI}, and the information decomposition in \eqref{eq:I}--\eqref{eq:I_1}, we can uniquely define the shared and synergistic information. In Sections \ref{sec:accuracy_effect}, \ref{sec:discrimination_effect}, we use the information decomposition in \eqref{eq:I}--\eqref{eq:UI} to construct information-theoretic measures for the accuracy and discriminatory impacts of subsets of features.

\subsection{Problem formulation} 
We consider a supervised learning setting in which each individual in the dataset is associated with the protected attribute(s) $A\in\mathcal{A}$, and a set of $n$ features $X^n=\{X_1,\cdots, X_n\}$, where $X_i\in\mathcal{X}$, $i\in[n]=\{1,2,\cdots,n\}$. For $S\subseteq [n]$, $X_S$ is the subset of features $\{X_i: i\in S\}$, and $X_{S^c}= X^n\setminus X_S$. For the classification task, let $Y\in\mathcal{Y}$ and $\Hat{Y}\in\mathcal{Y}$ be the true label and the predicted label of an individual, respectively.

To demonstrate the desired properties for our measures, we use a causal graph to represent the relations among the system variables \cite{pearl2009causality}. A causal graph is a directed acyclic graph whose nodes are random variables. A directed edge from $X$ to $Y$ indicates that $X$ is a direct cause of $Y$; we call $X$ a \textit{parent} of $Y$, and $Y$ a \textit{child} of $X$. If there is a directed path from $X$ to $Y$, $Y$ is called a \textit{descendant} of $X$, and $X$ an \textit{ancestor} of $Y$. For a joint probability distribution $P$ on the variables in a causal graph $G$, we assume that $P$ satisfies both Markov and faithfulness conditions with respect to $G$ \cite{pearl2009causality}. Thus, conditional independencies among the variables can be read from the graph \cite{pearl2009causality}. We require faithfulness since the desired properties for our measures are defined on the corresponding causal graph.  

For simplicity of demonstrating our measures, we assume that the protected attributes $A$ influence the true label $Y$ only through the features $X^n$. That is, $A$ does not have a direct causal influence on $Y$. As in \cite{kusner2017counterfactual}, we also assume ancestral closure of $A$, i.e., any parent of $A$ should be in $A$. For example, if race is a protected attribute, and mother's race is a parent of race, then mother's race should be a protected attribute as well. The following graphical model governs the statistical relations among the variables:
\begin{equation}
\label{eq:GM}
\begin{aligned}
	A\rightarrow &\hspace{0.5mm} X^n \rightarrow {Y}\\[-5pt]
	&\downarrow\\[-4pt]
	&\hspace{0.5mm}\Hat{Y}
\end{aligned}
\end{equation}
where $\Hat{Y}=f(X^n)$. $f: \mathcal{X}^n \mapsto \mathcal{Y}$ represents the classifier, where $\mathcal{X}^n$ is the $n$th Cartesian product of $\mathcal{X}$. 
\subsection{Quantifying accuracy effect}\label{sec:accuracy_effect}
Suppose $X=\{X_1,X_2\}$. To measure the impact of $\{X_1\}$ in accurately predicting $Y$, one candidate measure is $I(X_1;Y)$. However, this quantity contains the shared information between $X_1$ and $X_2$, with respect to $Y$, i.e., $SI(Y; X_1$, $X_2$), which should not be attributed to $X_1$ alone. For example, in the extreme case where $X_1$ and $X_2$ are copies of one another, i.e., $X_1=X_2$, it follows that $X_1$ is not essential for prediction. However, $I(X_1;Y)$ is equal to the total content of information. This suggests omitting the shared information from $I(X_1;Y)$. A second candidate measure is the unique information of $X_1$, i.e.,  $UI(Y;X_1\setminus X_2)$. This measure is not adequate either. For example, let $X_1,X_2 \in \{0,1\}$ be such that $P(X_1=1) =P(X_2=1) = 1/2+\epsilon$, and let $Y= X_1 \oplus X_2$. As $\epsilon \rightarrow 0$, $I(X_1;Y)\rightarrow 0$, which implies that $UI(Y;X_1\backslash X_2)\rightarrow 0$, while $X_1$ clearly impacts the accuracy of prediction. 

We postulate that a good accuracy measure for a subset of features $X_S\subseteq X^n$, denoted by $v^{Acc}(X_S)$, should satisfy the following properties:
\begin{itemize}\label{axiom}
    \item Non-negativity: $v^{Acc}(X_S) \geq 0$,  $\;\forall X_S\subseteq X^n$.
    \item Monotinicity: $v^{Acc}(X_{S_1}) \leq v^{Acc}(X_{S_2})$,  $\;\forall S_1 \subseteq S_2$.
    \item Blocking: $Y\independent X_S|\{A,X_{S^c}\} \iff v^{Acc}(X_S) = 0$. 
\end{itemize}
Including an additional feature should not decrease prediction accuracy, hence $v^{Acc}(X_S)$ should satisfy monotinicity. By assigning a zero measure to the empty set ($v^{Acc}(\emptyset)=0$), $v^{Acc}(X_S)$ is non-negative. The blocking property implies that, in the causal graph representing the relations among the variables, $v^{Acc}(X_S)$ is non-zero for features that form a {\it{Markov blanket}} of ${Y}$, and is zero for the remaining features. The two measures mentioned earlier (mutual and unique information) do not satisfy these properties. Mutual information does not satisfy blocking, and unique information is not monotone in $X_S$ \cite{bertschinger2014quantifying}.

We propose to use the sum of unique information (that is only available in $X_S$) and synergistic information of $X_S$ and all other variables, as a measure of the accuracy impact of $X_S$. This is equivalent to the conditional mutual information $I(Y;X_S| X_{S^c})$, which is the content of information that would be lost if we eliminate $\{X_S\}$ from the set $X^n$. 

\begin{definition}[Accuracy coefficient]
\label{def:1}
For a subset of features $X_S\subseteq X^n$, the accuracy coefficient of $X_S$ is given by
\begin{align}
\label{eq:vAcc}
 &v^{Acc}(X_S) = I(Y; X_S| \{A, X_{S^c}\})= UI(Y;X_S\backslash \{A,X_{S^c}\}) +
CI(Y;X_S, \{A,X_{S^c}\}).
\end{align}
\end{definition}
$v^{Acc}(X_S)$ satisfies all the desired properties. Non-negativity and monotinicity are straightforward. Theorem \ref{thm:1} below states that $v^{Acc}(X_S)$ satisfies the blocking property.

\begin{theorem} \label{thm:1}
For any subset of features $X_S\subseteq X^n$, the accuracy coefficient $v^{Acc}(X_S)$ is zero if and only if $X_S$ is not a direct cause of ${Y}$ in the corresponding causal graph, i.e., $Y\independent X_S | X_{S^c}$.
\end{theorem}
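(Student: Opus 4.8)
The plan is to reduce the statement to a single conditional independence and then transfer that independence between two conditioning sets that differ only by the protected attribute $A$. First I would observe that, by Definition~\ref{def:1} together with the standard fact that a conditional mutual information vanishes exactly when the corresponding conditional independence holds, $v^{Acc}(X_S) = I(Y; X_S \mid \{A, X_{S^c}\}) = 0$ if and only if $Y \independent X_S \mid \{A, X_{S^c}\}$. Hence it suffices to establish the equivalence $Y \independent X_S \mid \{A, X_{S^c}\} \iff Y \independent X_S \mid X_{S^c}$, i.e.\ that conditioning on $A$ is inessential to the independence of $Y$ and $X_S$.

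The structural input I would exploit is the modeling assumption that $A$ influences $Y$ only through the features $X^n$, so that $A$ is not a parent of $Y$ and $Y$ is a sink in the graph \eqref{eq:GM}. Reading this off the graph, every path from $A$ to $Y$ must enter $Y$ through one of its feature-parents, which is a non-collider lying in $X^n$; conditioning on all of $X^n = X_S \cup X_{S^c}$ therefore blocks every such path, giving $Y \independent A \mid X^n$, equivalently $I(Y; A \mid X_S, X_{S^c}) = 0$. Expanding $I(Y; X_S, A \mid X_{S^c})$ by the chain rule in the two possible orders and cancelling this vanishing term yields the key identity
\begin{equation}
I(Y; X_S \mid X_{S^c}) = I(Y; A \mid X_{S^c}) + I(Y; X_S \mid \{A, X_{S^c}\}).
\end{equation}

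The $(\Leftarrow)$ direction is then immediate: if $Y \independent X_S \mid X_{S^c}$ the left-hand side is zero, and since both summands on the right are non-negative, $v^{Acc}(X_S) = I(Y; X_S \mid \{A, X_{S^c}\}) = 0$. The $(\Rightarrow)$ direction is where the real content lies and is the step I expect to be the main obstacle: assuming $v^{Acc}(X_S) = 0$, the identity only gives $I(Y; X_S \mid X_{S^c}) = I(Y; A \mid X_{S^c})$, so I still must show that $I(Y; A \mid X_{S^c}) = 0$. I would close this by combining the two independences now in hand, namely $Y \independent X_S \mid \{A, X_{S^c}\}$ and the structural $Y \independent A \mid \{X_S, X_{S^c}\}$, through the intersection property of conditional independence (valid since $P$ is strictly positive), which yields $Y \independent \{X_S, A\} \mid X_{S^c}$ and hence, by decomposition, the desired $Y \independent X_S \mid X_{S^c}$.

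A purely graphical alternative, better matched to the faithfulness assumption already in force, is to note that $v^{Acc}(X_S) = 0$ forces no element of $X_S$ to be a parent of $Y$ (a direct edge $X_i \to Y$ with $X_i \in X_S$ would stay active under the conditioning set $\{A, X_{S^c}\}$), and then to verify by a short path analysis that removing $A$ from the conditioning set cannot open any new path from $X_S$ to the sink $Y$ precisely because no feature in $X_S$ is a parent of $Y$ (any fork $X_i \leftarrow A \to X_j \to Y$ unblocked by dropping $A$ would require $X_j \in X_S$ to be a parent of $Y$). Thus $X_S$ and $Y$ remain d-separated given $X_{S^c}$, and faithfulness again delivers $Y \independent X_S \mid X_{S^c}$. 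Either route isolates the essential point: under the no-direct-effect assumption on $A$, the accuracy coefficient detects exactly the direct causes of $Y$.
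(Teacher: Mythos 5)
Your proof is correct and follows essentially the same route as the paper's: both reduce $v^{Acc}(X_S)=0$ to the conditional independence $Y\independent X_S|\{A,X_{S^c}\}$ (the paper does this explicitly via non-negativity of the KL divergence) and then invoke the Markov and faithfulness assumptions on the causal graph to identify this with $X_S$ containing no parent of $Y$. Your extra step of transferring the independence from the conditioning set $\{A,X_{S^c}\}$ to $X_{S^c}$ --- via the chain-rule identity $I(Y;X_S|X_{S^c}) = I(Y;A|X_{S^c}) + I(Y;X_S|\{A,X_{S^c}\})$ together with the intersection property, or equivalently via your d-separation analysis --- makes explicit a passage the paper leaves implicit, and is sound here because faithfulness guarantees the intersection axiom even without assuming strict positivity.
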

\begin{proof}
The proof of Theorem \ref{thm:1} appears in Appendix \ref{appendix_thm1}.
\end{proof}

\begin{remark}
In Definition \ref{def:1}, we included the protected attribute $A$ along with the other features $X_{S^c}$ in the conditioning (and in the unique and synergistic information). This follows due to the assumption that the protected attribute $A$ is not a direct cause of $Y$, and hence $A\not\subset X^n$. If $A$ were to be a subset of $X^n$, then $\{A,X_{S^c}\}$ should be replaced by $X_{S^c}$ in \eqref{eq:vAcc}.
\end{remark}

\subsection{Quantifying discriminatory effect}
\label{sec:discrimination_effect}
As we mentioned earlier, discrimination can be explicit when the protected attribute $A$ is used as an input to the classifier, or implicit when the protected attribute influences the outcome through proxy features. Our goal in this section is to measure the discriminatory impact of a subset of features $X_S\subseteq X^n$. A measure for the discriminatory impact of $X_S$, denoted by $v^{D}(X_S)$, should satisfy the following properties:
\begin{itemize}\label{axio}
    \item Non-negativity: $v^D(X_S) \geq 0$
    \item Monotinicity: $v^D(X_{S_1}) \leq v^D(X_{S_2})~ \text{ for } ~ S_1 \subseteq S_2$
    \item Y-independence: $Y\independent X_S  \implies v^D(X_S) = 0$.
    \item A-independence: $A\independent X_S  \implies v^D(X_S) = 0$.
    \item AY-independence: $A\independent X_S|Y  \implies v^D(X_S) = 0$.
\end{itemize}
Non-negativity and monotonicity have the same reasoning as for the accuracy measure in Section \ref{sec:accuracy_effect}. We justify Y-independence and A-independence as follows. A subset $X_S$ that is ``irrelevant" to the classification task ($Y\independent X_S$), or does not act as a proxy for the protected attributes ($A\independent X_S$), should not be considered as discriminatory, i.e., $v^D(X_S)=0$. Similarly, AY-independence implies that, conditioned on the class label $\{Y=y\}$, if $X_S$ is independent of $A$, $v^D(X_S)$ should be zero. Note that neither $I(X_S;A)$ nor $I(X_S;A|Y)$ satisfies our desired properties. $I(X_S;A)$ does not satisfy Y-independence and AY-independence, and $I(X_S;A|Y)$ does not satisfy Y-independence and A-independence. Instead, we propose the following discrimination measure:

\begin{definition}[Discrimination coefficient]\label{def:2}
For a subset of features $X_S\subseteq X^n$, the discrimination coefficient is 
\begin{align}
\label{eq:vD}
v^{D}(X_S) \triangleq SI(Y;X_S,A) \times I(X_S;A) \times I(X_S;A|Y). 
\end{align}
\end{definition}

$SI(Y;X_S,A)$ measures the information content in $Y$ provided by $X_S$, and is ``discriminatory'' in the sense that it is shared with the protected attribute $A$. $I(X_S;A)$ measures the dependence between $X_S$ and $A$. $I(X_S;A|Y)$ measures the conditional dependence between $X_S$ and $A$, given $Y$. Our discrimination measure satisfies all the aforementioned properties:

\begin{proposition}\label{prop:1}
The discrimination coefficient $v^D(X_S)$ is monotone and non-negative. That is, (i) For any $S_1 \subseteq S_2$, we have  $v^D(X_{S_1}) \leq v^D(X_{S_2})$, and (ii) $v^D(X_S) \geq 0$, for all $X_S\subseteq X^n$.
\end{proposition}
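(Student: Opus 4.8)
The plan is to treat the three factors of $v^D(X_S)=SI(Y;X_S,A)\times I(X_S;A)\times I(X_S;A\mid Y)$ separately, showing that each is non-negative and non-decreasing in $S$, and then to combine them. Non-negativity is immediate: $I(X_S;A)$ and $I(X_S;A\mid Y)$ are a mutual and a conditional mutual information, both non-negative, while $SI(Y;X_S,A)\ge 0$ since the shared information is a component of the non-negative decomposition in \cite{bertschinger2014quantifying}. A product of non-negative numbers is non-negative, giving (ii). For (i), I would first note that monotonicity of each non-negative factor suffices for monotonicity of the product, since $0\le a_1\le a_2$, $0\le b_1\le b_2$, $0\le c_1\le c_2$ imply $a_1b_1c_1\le a_2b_2c_2$.

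For the two information factors I would invoke the chain rule. Writing $X_{S_2}=(X_{S_1},X_{S_2\setminus S_1})$ for $S_1\subseteq S_2$, we get $I(X_{S_2};A)=I(X_{S_1};A)+I(X_{S_2\setminus S_1};A\mid X_{S_1})\ge I(X_{S_1};A)$ because the conditional mutual information is non-negative, and the same argument with an additional conditioning on $Y$ gives $I(X_{S_2};A\mid Y)\ge I(X_{S_1};A\mid Y)$.

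The main obstacle is monotonicity of the shared-information factor $SI(Y;X_S,A)$, where the chain rule does not apply directly. Here I would rewrite it via \eqref{eq:I_1} as $SI(Y;X_S,A)=I(Y;A)-UI(Y;A\setminus X_S)$; since $I(Y;A)$ does not depend on $S$, it is enough to prove that the unique information $UI(Y;A\setminus X_S)=\min_{Q\in\Delta_P}I_Q(Y;A\mid X_S)$ is non-increasing as $X_S$ grows. The idea is to let $Q^*$ attain the minimum defining $UI(Y;A\setminus X_{S_1})$ and to lift it to a feasible candidate for the enlarged problem by appending the missing features through their true conditional, $Q'\triangleq P_{X_{S_2\setminus S_1}\mid Y,X_{S_1}}\cdot Q^*_{Y,X_{S_1},A}$. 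A short computation shows $Q'$ preserves the marginals $P_{Y,X_{S_2}}$ and $P_{Y,A}$, so $Q'\in\Delta_P$ for the $S_2$ problem, and that by construction $A$ and $X_{S_2\setminus S_1}$ are conditionally independent given $(Y,X_{S_1})$.

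The payoff of this construction is a clean chain-rule identity under $Q'$: expanding $I_{Q'}(Y,X_{S_2\setminus S_1};A\mid X_{S_1})$ in two ways and using $I_{Q'}(X_{S_2\setminus S_1};A\mid Y,X_{S_1})=0$ gives $I_{Q'}(Y;A\mid X_{S_2})=I_{Q'}(Y;A\mid X_{S_1})-I_{Q'}(X_{S_2\setminus S_1};A\mid X_{S_1})\le I_{Q'}(Y;A\mid X_{S_1})=I_{Q^*}(Y;A\mid X_{S_1})$, where the last equality holds because marginalizing $X_{S_2\setminus S_1}$ out of $Q'$ returns $Q^*$. Since $Q'$ is feasible for the enlarged minimization, $UI(Y;A\setminus X_{S_2})\le I_{Q'}(Y;A\mid X_{S_2})\le UI(Y;A\setminus X_{S_1})$, which is the desired anti-monotonicity of the unique information and hence the monotonicity of $SI(Y;X_S,A)$. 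Combining the three non-negative, non-decreasing factors yields both claims.
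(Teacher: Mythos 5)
Your proposal is correct and follows essentially the same route as the paper: non-negativity and monotonicity of each of the three factors, with the shared-information factor handled through the identity $SI(Y;X_S,A)=I(Y;A)-UI(Y;A\setminus X_S)$ and the anti-monotonicity of unique information in its third argument. The only difference is that where the paper simply cites \cite{bertschinger2014quantifying} for that anti-monotonicity, you supply a self-contained proof via the feasible coupling $Q'=P_{X_{S_2\setminus S_1}\mid Y,X_{S_1}}\cdot Q^*_{Y,X_{S_1},A}$, which is a valid and welcome addition but does not change the structure of the argument.
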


\begin{proof}
The proof of Proposition \ref{prop:1} is given in Appendix \ref{appendix_prop1}.
\end{proof}

\begin{proposition}\label{prop:2}
If a subset of features $X_S\subseteq X^n$ satisfies $X_S\independent Y$, $X_S\independent A$, or $X_S\independent A|Y$, then $v^D(X_S) = 0$.
\end{proposition}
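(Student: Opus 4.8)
The plan is to exploit the product structure of the discrimination coefficient in \eqref{eq:vD}: since $v^D(X_S)=SI(Y;X_S,A)\times I(X_S;A)\times I(X_S;A|Y)$, it suffices to show that each of the three hypothesized independence relations forces \emph{one} of the three factors to vanish, after which the entire product collapses. Two of the three cases are immediate. If $X_S\independent A$, then by the standard characterization of mutual information we have $I(X_S;A)=0$, annihilating the product; this dispatches the A-independence property. Likewise, if $X_S\independent A\mid Y$, then the conditional mutual information $I(X_S;A|Y)=0$, again making $v^D(X_S)=0$; this is the AY-independence property.

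The remaining case, $X_S\independent Y$, is where the bivariate decomposition of \cite{bertschinger2014quantifying} enters, and it is the only case requiring more than a one-line argument. I would instantiate \eqref{eq:I_1} with $(T,R_1,R_2)=(Y,X_S,A)$, which gives
\begin{equation*}
I(Y;X_S)=UI(Y;X_S\backslash A)+SI(Y;X_S,A).
\end{equation*}
Because the decomposition of \eqref{eq:UI} is non-negative, both summands on the right are at least zero, so in particular $0\le SI(Y;X_S,A)\le I(Y;X_S)$. The hypothesis $X_S\independent Y$ yields $I(Y;X_S)=0$, and the resulting squeeze forces $SI(Y;X_S,A)=0$. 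Hence the first factor of $v^D(X_S)$ vanishes and $v^D(X_S)=0$, establishing the Y-independence property.

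The single step I would flag as the crux is the non-negativity of the shared information $SI(Y;X_S,A)$: this is not an algebraic identity but a structural property of the particular measure defined in \eqref{eq:UI}, and it is precisely this non-negativity (equivalently, the bound $SI(Y;X_S,A)\le I(Y;X_S)$) that allows the vanishing of the mutual information to propagate to the shared term. Since \cite{bertschinger2014quantifying} guarantees that $UI$, $SI$, and $CI$ are all non-negative by construction, this step is justified, and no further calculation is needed beyond invoking the product form of \eqref{eq:vD} in each of the three disjuncts.
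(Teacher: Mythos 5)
Your proposal is correct and follows essentially the same route as the paper: the $A$-independence and $AY$-independence cases are handled by the vanishing of $I(X_S;A)$ and $I(X_S;A|Y)$, and the $Y$-independence case by the bound $SI(Y;X_S,A)\le I(Y;X_S)$, which the paper cites as Lemma~12 of \cite{bertschinger2014quantifying} and which you re-derive from \eqref{eq:I_1} plus non-negativity of the decomposition. The two arguments are equivalent, so no further comment is needed.
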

\begin{proof}
The proof of Proposition \ref{prop:2} appears in Appendix \ref{appendix_prop2}.
\end{proof}

Besides satisfying the desired properties, our discrimination measure $v^D(X_S)$ is in line with the notions of fairness in the literature \cite{dwork2012fairness, zemel2013learning,hardt2016equality}. 

\begin{definition}[Notions of Fairness]
\label{def:0}
A classifier $\hat{Y}:\mathcal{X}^n\rightarrow\mathcal{Y}$ satisfies demographic parity if it is statistically independent of the protected attribute ($\hat{Y}\independent A$). Further, $\Hat{Y}$ satisfies equalized odds if it is statistically independent of the protected attribute conditioned on the true label  ($\hat{Y}\independent A|Y$).
\end{definition}

To restrict our measures in \eqref{eq:vAcc} and \eqref{eq:vD} to a certain classifier $f(X^n)$, we can replace $X_S$ by $\Hat{Y}=\left.f\right|_{X_S}(X^n)$, where $\left.f\right|_{X_S}(X^n) = f(X^n)$ such that $X_{S^c}$ is held constant. For the discrimination measure $v^D(X_S)$, by replacing $X_S$ with $\Hat{Y}$, A-independence and AY-independence in Proposition \ref{prop:2} correspond to demographic parity and equalized odds.

We further investigate the properties of $v^D(X_S)$. In the following definition, a direct path refers to a chain of variables connected with directed arrows in the graphical model.

\begin{definition}
A subset of features $X_S\subseteq X^n$ is called path-discriminatory if it blocks all the direct paths from the protected attribute $A$ to the true label $Y$, i.e., $A\independent Y|X_S$.
\end{definition}

\begin{theorem}\label{thm: 2}
For every subset of features $X_S$ that is path-discriminatory, $v^D(X_S) = I(Y;A)I(X_S;A)I(X_S;A|Y)$. Further, for a path-discriminatory subset of features $K$ that is directly connected to the protected attribute $A$, i.e.,  $A\independent \left\{Y \cup \{X^n\setminus K\}\right\}|K$, we have $v^D(K) \geq v^D(X_S)$, for all $X_S\subseteq X^n$.
\end{theorem}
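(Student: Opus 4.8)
The plan is to handle the two assertions separately, reducing both to elementary properties of the bivariate decomposition and to (conditional) data-processing.

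For the first assertion, I would show that the path-discriminatory hypothesis forces the unique information $UI(Y;A\backslash X_S)$ to vanish. Instantiating \eqref{eq:I_1} with $T=Y$, $R_1=A$, $R_2=X_S$ gives $I(Y;A)=UI(Y;A\backslash X_S)+SI(Y;X_S,A)$, so it suffices to prove $UI(Y;A\backslash X_S)=0$. By \eqref{eq:UI}, $UI(Y;A\backslash X_S)=\min_{Q\in\Delta_P} I_Q(Y;A|X_S)$, where $\Delta_P$ fixes the pairwise marginals $Q_{Y,A}=P_{Y,A}$ and $Q_{Y,X_S}=P_{Y,X_S}$. The true distribution $P$ itself lies in $\Delta_P$, and path-discriminacy says precisely $A\independent Y|X_S$, i.e. $I_P(Y;A|X_S)=0$. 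Since conditional mutual information is non-negative, the minimum is attained with value $0$, so $UI(Y;A\backslash X_S)=0$ and hence $SI(Y;X_S,A)=I(Y;A)$. Substituting into \eqref{eq:vD} yields $v^D(X_S)=I(Y;A)\,I(X_S;A)\,I(X_S;A|Y)$, which is the first assertion.

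For the second assertion, applying the identity just established to the path-discriminatory set $K$ gives $v^D(K)=I(Y;A)\,I(K;A)\,I(K;A|Y)$. Because all three factors in \eqref{eq:vD} are non-negative (shared and mutual informations are non-negative), it suffices to dominate each factor of $v^D(X_S)$ by the matching factor of $v^D(K)$, for an arbitrary $X_S$. The first factor is immediate: \eqref{eq:I_1} together with non-negativity of $UI$ gives $SI(Y;X_S,A)\le I(Y;A)$ for every $X_S$.

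The remaining two factors I would control through the hypothesis $A\independent\{Y\cup(X^n\setminus K)\}\,|\,K$, which states $A\independent(Y,X_{K^c})\,|\,K$ with $K^c=[n]\setminus K$. The first step is to extract two Markov chains by splitting $X_S$ into its part inside $K$, a deterministic function of $K$, and its part $X_{S\setminus K}\subseteq X_{K^c}$. Marginalizing the hypothesis gives $A\independent X_S\,|\,K$, and, after checking that $A\independent(Y,X_{S\setminus K})\,|\,K$ forces $P(A|K,Y,X_{S\setminus K})=P(A|K)=P(A|K,Y)$, it also gives $A\independent X_S\,|\,\{K,Y\}$. These are exactly the chains $A-K-X_S$ and $A-\{K,Y\}-X_S$. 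The data-processing inequality applied to the first yields $I(X_S;A)\le I(K;A)$; the conditional data-processing inequality applied to the second, obtained by expanding $I(A;K,X_S|Y)$ two ways and using $I(A;X_S|K,Y)=0$, yields $I(X_S;A|Y)\le I(K;A|Y)$. Multiplying the three factor-wise inequalities, all non-negative, gives $v^D(X_S)\le v^D(K)$. I expect the main obstacle to be precisely this bookkeeping in the second part: verifying that the single conditional-independence hypothesis on $K$ descends to $A\independent X_S\,|\,\{K,Y\}$, especially when $S$ overlaps $K$ so that $X_S$ is not literally a subset of $X_{K^c}$. Everything else reduces to non-negativity of the information quantities and the two routine data-processing inequalities.
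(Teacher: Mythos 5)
Your proof is correct, and its skeleton matches the paper's: establish $SI(Y;X_S,A)=I(Y;A)$ for path-discriminatory sets, use $SI(Y;X_S,A)\le I(Y;A)$ in general, and compare the two products factor by factor. The differences are in how much is actually carried out. For the first assertion, the paper simply cites Lemma~13 of \cite{bertschinger2014quantifying}, whereas you re-derive it from the definition \eqref{eq:UI} by observing that $P\in\Delta_P$ and $I_P(Y;A|X_S)=0$ force the minimum to be zero; this is a clean, self-contained replacement for the citation. For the second assertion, the paper's written proof only compares the shared-information factor (via its Lemma~12) and then declares the theorem proved, without ever invoking the stronger hypothesis $A\independent\{Y\cup(X^n\setminus K)\}\,|\,K$ or addressing the factors $I(X_S;A)$ and $I(X_S;A|Y)$; the needed data-processing step surfaces only as a passing remark in the later proof of Corollary~\ref{cor:allayp}. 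Your proposal supplies exactly the missing bookkeeping: splitting $X_S$ into $X_{S\cap K}$ (a function of $X_K$) and $X_{S\setminus K}\subseteq X_{K^c}$, marginalizing the hypothesis to get the chains $A-K-X_S$ and $A-\{K,Y\}-X_S$, and applying the unconditional and conditional data-processing inequalities to dominate the second and third factors. In short, your argument is the same route but is more complete than the paper's own proof of this theorem; the only thing the paper's version buys is brevity by outsourcing the shared-information facts to the cited reference.
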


\begin{proof}
The proof of Theorem \ref{thm: 2} is given in Appendix \ref{appendix_thm2}.
\end{proof}

\section{Aggregation of Effects}\label{sec:agg}
In Section \ref{sec:accdesmeasures}, we defined accuracy and discrimination measures for subsets of features, $v^{Acc}(X_S)$ and $v^D(X_S)$. However, we shall not use $v^{Acc}(X_i)$ and $v^D(X_i)$ to measure the \textit{marginal} accuracy and discrimination impacts of a single feature $X_i$; these do not take into account the correlation among features. For instance, from the definition of shared information, it is evident that $SI(Y;X_i,A)$ is a function of only the marginals $P_{YX_i}$ and $P_{YA}$. 
In order to account for this correlation, we need to factor in the aggregate effect of all subsets of features that include a certain feature. For example, to measure the contribution of a single State's electoral votes on winning the election, we need to first measure the impact of that state when included with all possible combinations of other states, and aggregate these to deduce its marginal impact. We can calculate the effect of adding $X_i$ to an arbitrary subset $X_S$ by $v(X_S\cup \{X_i\}) - v(X_S)$. An appropriately weighted sum of these effects can provide an aggregation measure. This leads us to the \textit{Shapley value} function \cite{shapley1953value}:

\begin{definition}
Let $\mathcal{P}$ denote the power set. Given a \textit{characteristic function} $v(\cdot): \mathcal{P}([n])\rightarrow \mathbb{R}$, the Shapley value function $\phi_{(\cdot)}: [n]\rightarrow \mathbb{R}$ is defined as:
\[
\phi_i = \sum_{T\subseteq [n]\backslash i} \frac{|T|!(n-|T|-1)!}{n!} (v(T\cup \{i\}) - v(T)), ~\forall i\in[n].
\]
Given the characteristic functions  $v^{Acc}(\cdot)$ and $v^D(\cdot)$, the corresponding Shapley value functions are denoted by $\phi^{Acc}_{(\cdot)}$ and $\phi^D_{(\cdot)}$. We refer to these as \textit{marginal accuracy coefficient} and \textit{marginal discrimination coefficient}, respectively.
\end{definition}
The weights $\frac{|T|!(n-|T|-1)!}{n!}$ in the definition of the Shapley value function are chosen so that the following lemma holds:
\begin{lemma}\label{lem:44}
\cite{young1985monotonic} Shapley value is the unique aggregation function satisfying the following properties:
\begin{itemize}
    \item Symmetry: If $v(T\cup \{i\}) = v(T\cup \{j\})$, for all $T\subseteq [n]\setminus \{i,j\}, \implies \phi_i = \phi_j$.
    \item Efficiency: $\sum_{i\in [n]} \phi_i = v([n])$.
    \item Monotonicity: Given two characteristic functions $v^{(1)}(\cdot)$, $v^{(2)}(\cdot)$, and the corresponding Shapley value functions $\phi_{(\cdot)}^{(1)}$, $\phi_{(\cdot)}^{(2)}$, if $v^{(1)}(T\cup \{i\}) - v^{(1)}(T) \geq v^{(2)}(T\cup \{i\}) - v^{(2)}(T), \forall T\subseteq [n] \implies \phi_i^{(1)} \geq \phi_i^{(2)}$.
\end{itemize}
\end{lemma}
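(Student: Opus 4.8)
The plan is to prove both directions of the stated characterization: first that the Shapley value $\phi$ defined above does satisfy symmetry, efficiency, and (strong) monotonicity, and second that it is the \emph{only} rule assigning to each characteristic function $v:\mathcal{P}([n])\to\mathbb{R}$ a value vector $(\psi_i)_{i\in[n]}$ enjoying all three. Throughout I assume the normalization $v(\emptyset)=0$, consistent with $v^{Acc}(\emptyset)=0$. The central tool for the uniqueness half is the \emph{unanimity basis}: for each nonempty $S\subseteq[n]$ define the unanimity game $u_S$ by $u_S(T)=1$ if $S\subseteq T$ and $u_S(T)=0$ otherwise. By M\"obius inversion, every $v$ with $v(\emptyset)=0$ has a unique representation $v=\sum_{\emptyset\neq S\subseteq[n]}c_S\,u_S$ with $c_S=\sum_{R\subseteq S}(-1)^{|S|-|R|}v(R)$, so the $u_S$ form a basis of the space of games.

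Existence is the easy half. Writing $\phi_i=\sum_{T\subseteq[n]\setminus i}w_{|T|}\,\big(v(T\cup\{i\})-v(T)\big)$ with nonnegative weights $w_{|T|}=|T|!(n-|T|-1)!/n!$, monotonicity is immediate, since dominating every marginal contribution $v(T\cup\{i\})-v(T)$ dominates their nonnegatively weighted sum. Symmetry follows from the same formula: when $i$ and $j$ have identical marginal contributions against every $T$ avoiding both, the two sums for $\phi_i$ and $\phi_j$ agree term by term. Efficiency, $\sum_i\phi_i=v([n])$, is a combinatorial identity I would verify by collecting, for each coalition $C$, the total weight with which $v(C)$ appears across all $\phi_i$: it enters positively (via $v(T\cup\{i\})$ with $T=C\setminus\{i\}$) for each $i\in C$, and negatively (via $v(T)$ with $T=C$) for each $i\notin C$; these cancel for every proper $C$ because $|C|\,w_{|C|-1}=(n-|C|)\,w_{|C|}$, leaving only the $+v([n])$ term.

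The uniqueness direction is the heart of the argument. The first move is to upgrade the monotonicity axiom to \emph{marginality}: applying monotonicity in both directions shows that if two games give player $i$ identical marginal contributions $v(T\cup\{i\})-v(T)$ for all $T$, then any admissible value assigns $i$ the same number in both. Let $\psi$ be any value satisfying the three axioms; I will show $\psi=\phi$ by induction on the number $k$ of nonzero coefficients $c_S$ in the unanimity representation of $v$. For $k=0$ (the zero game) symmetry makes all players interchangeable and efficiency forces each to receive $0$, matching $\phi$. For the inductive step, set $I=\bigcap_{S:\,c_S\neq 0}S$. For a player $i\notin I$ there is a carrier $S^\ast$ with $c_{S^\ast}\neq 0$ and $i\notin S^\ast$; deleting from $v$ exactly the unanimity terms whose carrier omits $i$ yields a game $v'$ with strictly fewer nonzero terms (at least $S^\ast$ is gone) and \emph{identical} marginal contributions for $i$, because $u_S(T\cup\{i\})-u_S(T)=0$ whenever $i\notin S$. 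Marginality then gives $\psi_i(v)=\psi_i(v')$ and $\phi_i(v)=\phi_i(v')$, while the induction hypothesis gives $\psi_i(v')=\phi_i(v')$; hence $\psi_i(v)=\phi_i(v)$ for every $i\notin I$. Finally, any two players $i,j\in I$ are symmetric in $v$, since each belongs to every carrier, so adding either to a coalition avoiding both leaves every $u_S$ value at $0$; thus $\psi$ and $\phi$ are each constant over $I$, and efficiency together with the agreement already established off $I$ pins down their common total over $I$, hence each individual value. This closes the induction.

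I expect the main obstacle to be the bookkeeping of the inductive step rather than any deep difficulty: one must check that $v'$ has strictly fewer nonzero dividends, that it preserves $i$'s marginal contributions \emph{exactly}, and that the symmetry of the players in $I$ holds for \emph{all} coalitions, not merely those of one size. A second point worth isolating cleanly is the reduction from the stated strong monotonicity to marginality, since it is this weaker consequence, rather than monotonicity as literally phrased, that actually drives the uniqueness proof.
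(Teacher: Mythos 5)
Your proposal is correct, but note that the paper does not actually prove this lemma: it is imported wholesale by citation to Young (1985), so there is no in-paper argument to compare against. What you have written is a faithful, self-contained reconstruction of Young's original proof --- existence by direct computation with the weights $w_{|T|}=|T|!(n-|T|-1)!/n!$ (your cancellation identity $|C|\,w_{|C|-1}=(n-|C|)\,w_{|C|}$ for efficiency checks out), and uniqueness by first weakening strong monotonicity to marginality, then inducting on the number of nonzero coefficients in the unanimity (M\"obius) expansion, handling players outside the common carrier $I=\bigcap_{c_S\neq 0}S$ via marginality and those inside $I$ via symmetry plus efficiency. The two points you flag as delicate are exactly the right ones, and your treatment of both is sound: dropping the terms $c_S u_S$ with $i\notin S$ leaves player $i$'s marginal contributions unchanged because $u_S(T\cup\{i\})=u_S(T)$ whenever $i\notin S$, and any two players of $I$ are symmetric in the full sense because every surviving carrier contains both, forcing $v(T\cup\{i\})=v(T\cup\{j\})=0$ for every $T$ avoiding both. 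One small caveat worth making explicit: the uniqueness argument requires $\psi$ to be a value defined on \emph{all} characteristic functions (since the induction passes from $v$ to a different game $v'$), which is the intended reading of ``unique aggregation function'' but is worth stating.
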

Besides these properties, we have the following result:
\begin{corollary}\label{cor:1}
Marginal accuracy and discrimination coefficients are non-negative, i.e., $\phi_i^{Acc}$, $\phi_i^D \geq 0$, for all $i\in [n]$.
\end{corollary}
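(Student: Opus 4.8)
The plan is to exploit the monotonicity of the two characteristic functions together with the manifestly non-negative weights appearing in the Shapley value. First I would recall that both $v^{Acc}$ and $v^D$ are non-decreasing set functions. Monotonicity of $v^{Acc}$ is immediate from its expression as the conditional mutual information $I(Y; X_S \mid \{A, X_{S^c}\})$ in Definition \ref{def:1}: enlarging $S$ cannot decrease the information that $Y$ retains about the added coordinates, as already noted in the text (``Non-negativity and monotinicity are straightforward''). Monotonicity of $v^D$ is exactly part (i) of Proposition \ref{prop:1}.

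Next, I would fix a feature $i \in [n]$ and inspect the defining sum $\phi_i = \sum_{T \subseteq [n] \setminus i} \frac{|T|!(n-|T|-1)!}{n!}\bigl(v(T \cup \{i\}) - v(T)\bigr)$. Each weight $\frac{|T|!(n-|T|-1)!}{n!}$ is a ratio of factorials and is therefore strictly positive. Since $T \subseteq T \cup \{i\}$, monotonicity yields $v(T \cup \{i\}) - v(T) \geq 0$ for every $T$. Hence $\phi_i$ is a non-negative linear combination of non-negative marginal contributions and is itself non-negative. Instantiating this argument with $v = v^{Acc}$ gives $\phi_i^{Acc} \geq 0$, and with $v = v^D$ gives $\phi_i^D \geq 0$, for every $i \in [n]$.

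Equivalently, the claim can be read off directly from Lemma \ref{lem:44}: taking the second characteristic function in the Monotonicity property to be identically zero makes its Shapley value vanish, and the pointwise inequality $v(T \cup \{i\}) - v(T) \geq 0$ then transfers to $\phi_i \geq 0$. I expect no real obstacle here; the only point requiring care is to confirm that monotonicity of both coefficients runs in the correct direction (non-decreasing in $S$), which is already guaranteed by the definition of $v^{Acc}$ and by Proposition \ref{prop:1}.
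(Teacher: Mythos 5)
Your proposal is correct and follows essentially the same route as the paper's own proof: monotonicity of $v^{Acc}$ (from its form as a conditional mutual information) and of $v^D$ (Proposition \ref{prop:1}) makes every marginal contribution $v(T\cup\{i\})-v(T)$ non-negative, and the positive Shapley weights then give $\phi_i\geq 0$. The extra remark about deducing the same conclusion from the Monotonicity property in Lemma \ref{lem:44} with the zero characteristic function is a valid reformulation but not a substantively different argument.
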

\begin{proof}
The proof of Corollary \ref{cor:1} appears in Appendix \ref{appendix_cor1}.
\end{proof}

The following corollary captures the special case when one of the features, say $X_a$, where $a\in[n]$, is a sufficient statistics for the true label $Y$. In this case, $X_a$ should attain the highest marginal accuracy coefficient among all features. 
\begin{corollary}\label{cor:acc}
In the special case where $Y$ has a single parent $X_a$, i.e., $Y\independent \{A\cup \{X^n \setminus X_a\}\}|X_a$, we have $\phi_a^{Acc} \geq \phi_j^{Acc}$, for all $j\in [n]\setminus a$.
\end{corollary}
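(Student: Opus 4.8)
The plan is to reduce the inequality $\phi_a^{Acc}\geq\phi_j^{Acc}$ to a coalition-by-coalition comparison of accuracy coefficients, and then use the single-parent (sufficiency) hypothesis to show that every feature subset omitting $X_a$ carries zero accuracy coefficient. The point is that comparing two \emph{players} under one characteristic function is best done through the standard pairwise Shapley identity rather than by evaluating $\phi_a^{Acc}$ and $\phi_j^{Acc}$ separately.

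First I would establish that identity. Splitting each sum in the definition of $\phi^{Acc}$ according to whether a subset $T\subseteq[n]\setminus\{i\}$ contains the competing index, and reindexing, one obtains
\begin{equation*}
\phi_a^{Acc} - \phi_j^{Acc} = \sum_{T\subseteq[n]\setminus\{a,j\}} \left(\frac{|T|!(n-|T|-1)!}{n!} + \frac{(|T|+1)!(n-|T|-2)!}{n!}\right)\left(v^{Acc}(X_{T\cup\{a\}}) - v^{Acc}(X_{T\cup\{j\}})\right),
\end{equation*}
where each factorial is well defined since $|T|\leq n-2$. Because every weight in this sum is strictly positive, it suffices to prove $v^{Acc}(X_{T\cup\{a\}}) \geq v^{Acc}(X_{T\cup\{j\}})$ for every $T\subseteq[n]\setminus\{a,j\}$.

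Next I would show the subtracted term vanishes. Fix such a $T$ and set $S=T\cup\{j\}$; since $a\notin S$, the index $a$ lies in $S^c$, so $X_a$ already appears inside the conditioning set of $v^{Acc}(X_S)=I(Y;X_S\mid\{A,X_{S^c}\})$. Applying the weak-union (graphoid) property to the hypothesis $Y\independent\{A\cup\{X^n\setminus X_a\}\}\mid X_a$ with $U=X_S$ and $V=\{A\}\cup X_{S^c\setminus\{a\}}$ (so that $U\cup V=\{A,X^n\setminus X_a\}$) yields $Y\independent X_S\mid(X_a,V)=\{A,X_{S^c}\}$, hence $v^{Acc}(X_{T\cup\{j\}})=0$. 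Since $v^{Acc}$ is non-negative (noted after Definition \ref{def:1}), we get $v^{Acc}(X_{T\cup\{a\}})\geq 0 = v^{Acc}(X_{T\cup\{j\}})$ for every $T$, and the displayed identity then forces $\phi_a^{Acc}\geq\phi_j^{Acc}$.

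I expect the only delicate point to be the conditional-independence bookkeeping in the second step: one must check that the weak-union decomposition applies \emph{uniformly} over all coalitions $T$, and that after moving $X_a$ into the conditioning block the residual is exactly $\{A,X_{S^c}\}$. The Shapley difference identity is a routine reindexing, and non-negativity of $v^{Acc}$ is already available, so no additional machinery is needed there.
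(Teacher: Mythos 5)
Your proof is correct, and it takes a route that differs from the paper's in two respects. First, the paper evaluates $\phi_i^{Acc}$ directly: it rewrites each marginal contribution as a difference of mutual informations, $I(Y;A,X_S\cup\{X_i\})-I(Y;A,X_S)$, and uses the sufficiency of $X_a$ to show that these \emph{marginal contributions} vanish whenever $X_a\in X_S$, concluding the comparison of $\phi_a^{Acc}$ and $\phi_i^{Acc}$ somewhat tersely (the final term-matching between the two Shapley sums is left implicit). You instead invoke the standard pairwise Shapley difference identity, which reduces everything to comparing $v^{Acc}(X_{T\cup\{a\}})$ with $v^{Acc}(X_{T\cup\{j\}})$ over coalitions $T$ avoiding both indices; this makes the comparison step fully rigorous and is, in my view, cleaner than what the paper offers. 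Second, your key vanishing fact is different in kind: rather than showing that adding $X_j$ contributes nothing once $X_a$ is in the coalition, you show via weak union that the entire coalition value $v^{Acc}(X_S)$ is zero whenever $a\notin S$ (because $X_a$ then sits in the conditioning set $\{A,X_{S^c}\}$ and screens off $Y$). Combined with non-negativity of $v^{Acc}$, this immediately gives termwise dominance. Both arguments rest on the same sufficiency hypothesis, but yours avoids the chain-rule manipulations of mutual information entirely and closes the gap in the paper's last step; the paper's version has the minor advantage of exhibiting the explicit value $I(Y;X_a)$ that every coalition containing $X_a$ attains. Your bookkeeping in the weak-union step is correct: $U=X_S$ and $V=\{A\}\cup X_{S^c\setminus\{a\}}$ do partition $\{A\}\cup(X^n\setminus X_a)$, and the resulting conditioning set is exactly $\{A,X_{S^c}\}$.
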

\begin{proof}
The proof for Corollary \ref{cor:acc} appears in Appendix \ref{appendix_cor2}.
\end{proof}

Corollary \ref{cor:allayp} below states, if a certain feature is a sufficient statistics for the protected attribute, its marginal discrimination coefficient should be the highest among all features. 

\begin{corollary}\label{cor:allayp}
In the special case where $A$ has a single child $X_d$, and $A\independent (Y\cup \{X^n \setminus X_d\})|X_d$;  $\phi_d^D \geq \phi_j^D, \forall j\in [n]\setminus d$.
\end{corollary}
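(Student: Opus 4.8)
The plan is to reduce the comparison of the two Shapley values to a coalition-wise comparison of the characteristic function $v^D$, and then verify the latter factor by factor using the Markov structure induced by the hypothesis $A\independent(Y\cup\{X^n\setminus X_d\})\mid X_d$.

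First I would invoke the standard pairwise-difference identity for the Shapley value. Splitting the defining sums of $\phi_d^D$ and $\phi_j^D$ according to whether a coalition contains the other index, and cancelling the common terms, one obtains
\begin{equation*}
\phi_d^D-\phi_j^D=\sum_{S\subseteq[n]\setminus\{d,j\}} w(|S|)\,\bigl(v^D(X_{S\cup\{d\}})-v^D(X_{S\cup\{j\}})\bigr),
\end{equation*}
where $w(|S|)=\tfrac{|S|!(n-|S|-1)!}{n!}+\tfrac{(|S|+1)!(n-|S|-2)!}{n!}>0$ for all admissible $|S|$. Hence it suffices to show, for every $S\subseteq[n]\setminus\{d,j\}$, that $v^D(X_{S\cup\{d\}})\ge v^D(X_{S\cup\{j\}})$. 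Since $v^D(X_T)=SI(Y;X_T,A)\cdot I(X_T;A)\cdot I(X_T;A\mid Y)$ is a product of three non-negative quantities, it is enough to establish the inequality separately for each factor; the product inequality then follows by monotonicity of products of non-negative reals.

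For the two mutual-information factors I would use the hypothesis directly. From $A\independent(X^n\setminus X_d)\mid X_d$, the chain $A-X_d-X_T$ is Markov for any $T$, so the data-processing inequality yields $I(X_T;A)\le I(X_d;A)$, with equality whenever $d\in T$ since then $I(X_{T\setminus d};A\mid X_d)=0$. Thus $I(X_{S\cup\{d\}};A)=I(X_d;A)\ge I(X_{S\cup\{j\}};A)$. The hypothesis also gives $A\independent X_T\mid(X_d,Y)$—this follows from $A\independent(Y,X_T)\mid X_d$ together with $A\independent Y\mid X_d$, both consequences of the single assumption—so the conditional data-processing inequality yields, by the same reasoning conditioned on $Y$, that $I(X_{S\cup\{d\}};A\mid Y)=I(X_d;A\mid Y)\ge I(X_{S\cup\{j\}};A\mid Y)$.

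The main obstacle is the shared-information factor $SI(Y;X_T,A)$, since it is defined through a minimization over the coupling set $\Delta_P$ and is not obviously monotone. I would handle it by showing that including $X_d$ drives this factor to its global maximum. Using the identity $SI(Y;X_T,A)=I(Y;A)-UI(Y;A\setminus X_T)$ from \eqref{eq:I_1}, the definition \eqref{eq:UI}, and the fact that the true distribution $P$ always lies in $\Delta_P$, I would bound $UI(Y;A\setminus X_{S\cup\{d\}})\le I_P(Y;A\mid X_{S\cup\{d\}})$. The hypothesis forces $I(A;Y,X_S\mid X_d)=0$, hence $I_P(Y;A\mid X_{S\cup\{d\}})=0$, so $UI(Y;A\setminus X_{S\cup\{d\}})=0$ and therefore $SI(Y;X_{S\cup\{d\}},A)=I(Y;A)$. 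Since $SI(Y;X_T,A)\le I(Y;A)$ for every $T$ (again because unique information is non-negative), this is the largest attainable value, giving $SI(Y;X_{S\cup\{d\}},A)\ge SI(Y;X_{S\cup\{j\}},A)$. Combining the three factor inequalities yields $v^D(X_{S\cup\{d\}})\ge v^D(X_{S\cup\{j\}})$ for every $S$, and the pairwise-difference identity then gives $\phi_d^D\ge\phi_j^D$.
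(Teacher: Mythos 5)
Your proof is correct and follows essentially the same route as the paper's: both reduce the claim to the coalition-wise inequality $v^D(X_{S\cup\{d\}})\ge v^D(X_{S\cup\{j\}})$, established via the data-processing inequality for the two mutual-information factors and the observation that including $X_d$ forces $UI(Y;A\setminus X_{S\cup\{d\}})=0$ and hence drives $SI(Y;\cdot,A)$ to its maximum $I(Y;A)$. The only difference is bookkeeping: you use the pairwise-difference identity for Shapley values, whereas the paper splits the sum for $\phi_j^D$ over coalitions that do or do not contain $X_d$ (zero increment in the former case) and then invokes monotonicity --- your version is somewhat cleaner and more self-contained, but it is not a different argument.
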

\begin{proof}
The proof of Corollary \ref{cor:allayp} is provided in Appendix \ref{appendix_cor3}.
\end{proof}
A feature that is a single child of $A$ is the ``worst'' feature in terms of discrimination. This feature is the most correlated with $A$, and all the information that pertains to $A$ passes through this feature to the other nodes of the graph. In other words, removing this feature results in $A$ and $Y$ being independent.

We can use the marginal coefficients $\phi^{Acc}_i$ and $\phi^D_i$ to define a score for each feature (for feature selection), termed as the fairness-utility score. Specifically, given $\phi^{Acc}_i$ and $\phi^D_i$, the fairness-utility score for a feature $X_i$ is defined as $\mathcal{F}_i = \phi^{Acc}_i - \alpha \phi^D_i$, where $\alpha$ is a positive hyperparametter which trades off between accuracy and discrimination.
\section{Experimental Results} \label{sec:exp}
We evaluate our accuracy and discrimination measures both on real and synthetic datasets.

\subsection{Synthetic Dataset}
We evaluate our measures on a randomly generated dataset, following the graphical model in Figure \ref{fig:graphsyn}. We randomly assigned a conditional distribution for each variable given their parents in the graph. We calculated the joint distribution $P_{XAY}$ from these conditional probabilities. We calculated the marginal accuracy and discrimination coefficients for this joint distribution. The results are shown in Table \ref{tab:syn}. 

Furthermore, we randomly split the whole dataset into training and test subsets. We trained a neural network classifier with the features as the input, and prediction of $Y$ as the output. We removed one feature at a time and repeated the same procedure with the remaining features. Similar to the real dataset, the prediction error is calculated using cross entropy loss, and the bias is calculated by the KL divergence between $P_{\hat{Y}|A=0}$ and $P_{\hat{Y}|A=1}$. We ran this procedure 100 times, and calculated the average and the confidence intervals. The results are shown in Figure \ref{fig:syndrop}. It is clear that removal of $X_3$ and $X_4$, which are the most important features according to the marginal accuracy coefficient, results in a higher increase in the prediction error. In addition, as suggested by the marginal discrimination coefficient, $X_1$ is the most discriminatory feature. We observe that removing $X_1$ results in the lowest bias in the prediction.

\begin{table}
\centering
\begin{tabular}{l*{6}{c}r}
Features      & $^{\scriptscriptstyle 10^2\times}\phi^{Acc}$ &  $^{\scriptscriptstyle 10^7\times}\phi^{D}$ \\
\thickhline
1) \texttt{$X_1$} & $0.027$ & $2.7927$\\
\hline
2) \texttt{$X_2$}  & $0.052$ & $0.0923$\\
\hline
3) \texttt{$X_3$}       & $4.467$ & $0.0959$\\
\hline
4) \texttt{$X_4$}     & $3.122$ & $0.0899$\\
\hline
5) \texttt{$X_5$}     & $0.141$ & $1.4301$\\
\end{tabular}
\caption{Discrimination and accuracy coefficients For synthetic dataset.}
\label{tab:syn}
\end{table}
\begin{figure}
\centering
\begin{subfigure}{.3\textwidth}
  \centering
  \includegraphics[width=1\linewidth]{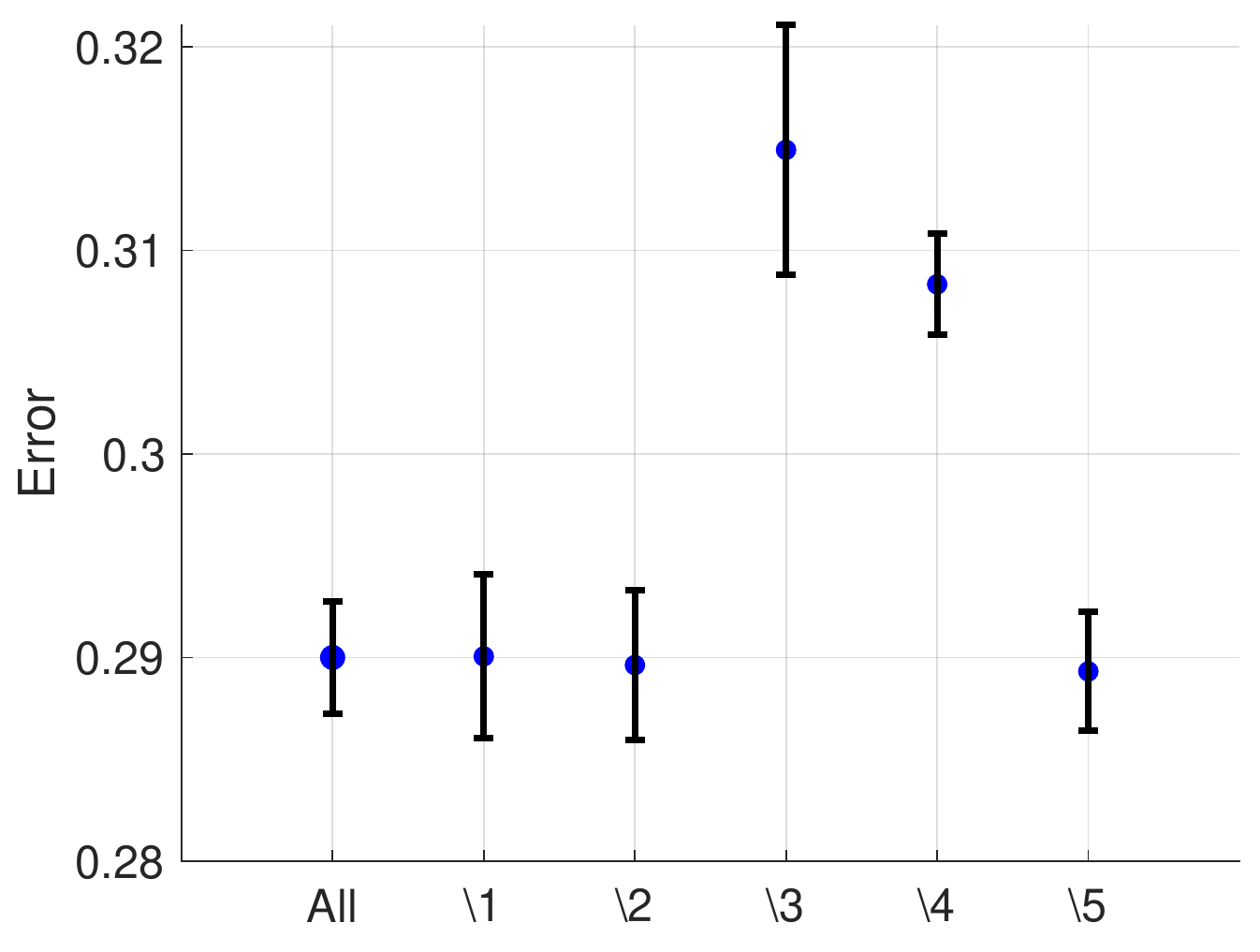}
  \label{fig:sub11}
\end{subfigure}
\begin{subfigure}{.3\textwidth}
  \centering
  \includegraphics[width=1\linewidth]{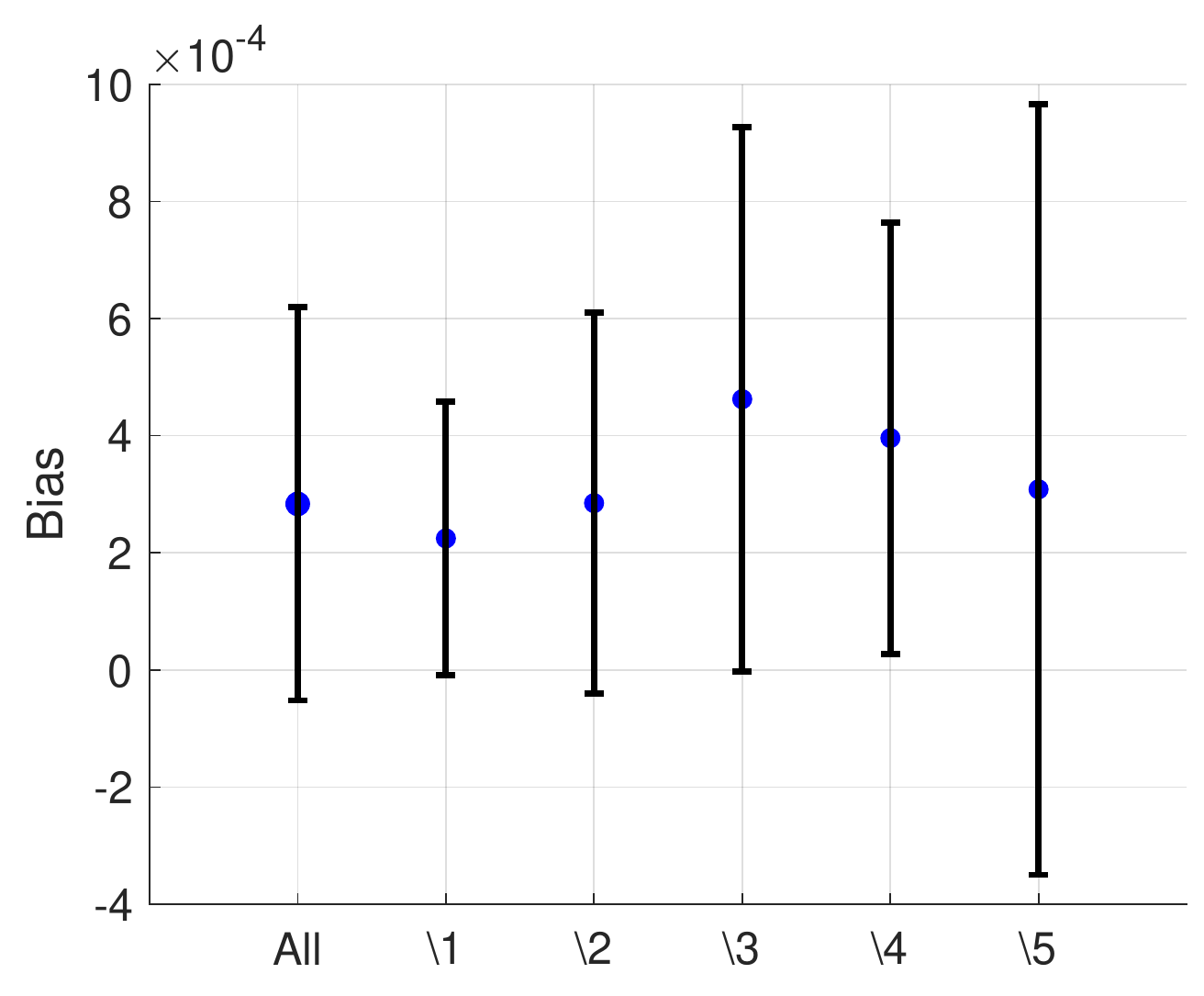}
  \label{fig:sub22}
\end{subfigure}
\caption{a) Error and b) Bias of the classifier over synthetic dataset when all the features are used, and when one of the features is removed.}
\label{fig:syndrop}
\end{figure}

\begin{figure}
    \centering
    \includegraphics[width=0.35\linewidth]{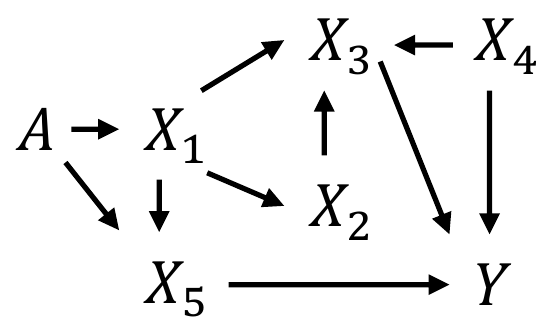}
    \caption{The graphical model of the data generating process of the synthetic data.}
    \label{fig:graphsyn}
\end{figure}

\subsection{ProPublica COMPAS Dataset}
ProPublica COMPAS dataset \cite{angwin2016machine} contains the criminal history and demographic makeup of defendants in Broward County, Florida from 2013-2014. We processed the raw dataset by dropping records with missing information and converted categorical variables to numerical values. The race of each individual is provided in the COMPAS dataset and constitutes our protected attribute. We restrict our analysis to individuals who are African American (A = 0) or Caucasian (A = 1). Our processed dataset contains 5334 records (3247 African Americans and 2087 Caucasians). Each individual in our dataset has a feature vector (\texttt{Age}, \texttt{Charge Degree}, \texttt{Gender}, \texttt{Prior Counts}, \texttt{Length Of Stay}), and a binary true label which indicates whether the individual was arrested for a crime within 2 years of release. \texttt{Age} variable takes three levels of \texttt{Age} $<25$, $25<$ \texttt{Age} $<45$, or \texttt{Age} $ > 45$. \texttt{Charge Degree} has two values Misdemeanor or Felony, \texttt{Gender} is either Male or Female, \texttt{Prior Counts} can be $0$, $1-3$, or larger than $3$, and \texttt{Length of Stay} can be $\leq 1$  week, $\leq 3$  months, or $> 3$  months.

We applied our measures of accuracy and discrimination to this dataset (see Table \ref{tab:1}). The result shows that \texttt{Prior Count} and \texttt{Age} exhibit strongest proxies for discrimination. This is in line with the findings of \cite{wang2018direction}. In addition, the result shows that \texttt{Charge Degree} and \texttt{Gender} are the least informative features for the prediction task.

Furthermore, same as the synthetic dataset, we designed a classifier with all the features included, and with one of the features removed. The results are shown in Figure \ref{fig:compasdrop}. As it is predicted by the marginal discrimination coefficient, removal of \texttt{Age} or \texttt{Prior Counts} results in the lowest bias in the classifier output. In addition, as it is anticipated by the accuracy coefficient, removal of \texttt{Charge Degree} and \texttt{Gender} has very small effect on the error of the predictor.

\begin{table}
\centering
\begin{tabular}{l*{6}{c}r}
Features & $^{\scriptscriptstyle 10^2\times}\phi^{Acc}$ &  $^{\scriptscriptstyle 10^6\times}\phi^{D}$ \\
\thickhline
1) \texttt{Age} & $1.7892$ & $6.0904$\\
\hline
2) \texttt{Charge Degree}  & $0.1723$ & $1.3916$\\
\hline
3) \texttt{Gender}       & $0.4186$ & $1.6849$\\
\hline
4) \texttt{Prior Counts}     & $3.6116$ & $5.0466$\\
\hline
5) \texttt{Length of Stay}     & $1.8290$ & $2.3126$\\
\end{tabular}
\caption{Discrimination and accuracy coefficients for COMPAS dataset.}
\label{tab:1}
\end{table}

\begin{figure}
\centering
\begin{subfigure}{.3\textwidth}
  \centering
  \includegraphics[width=1\linewidth]{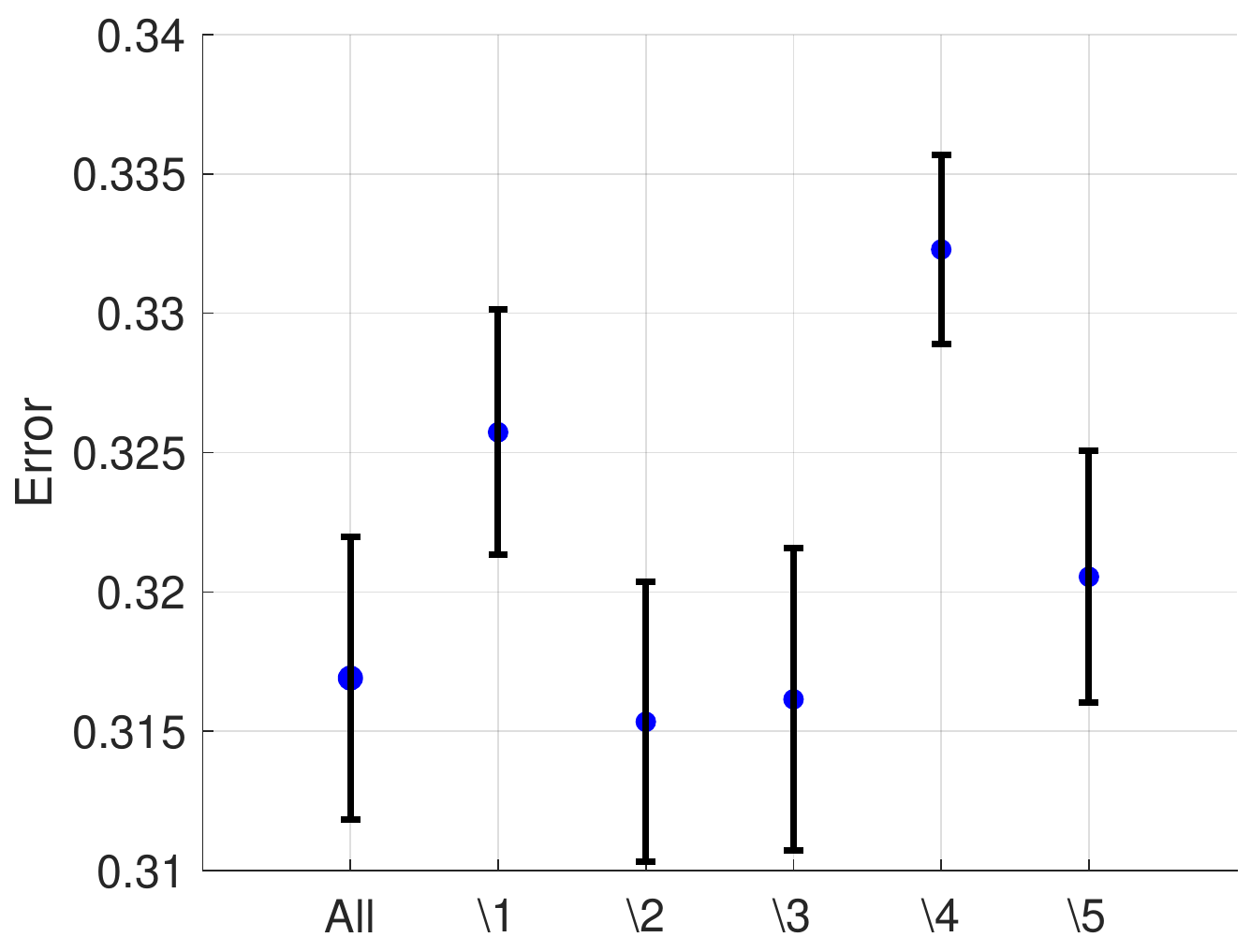}
  \label{fig:sub1}
\end{subfigure}
\begin{subfigure}{.3\textwidth}
  \centering
  \includegraphics[width=1\linewidth]{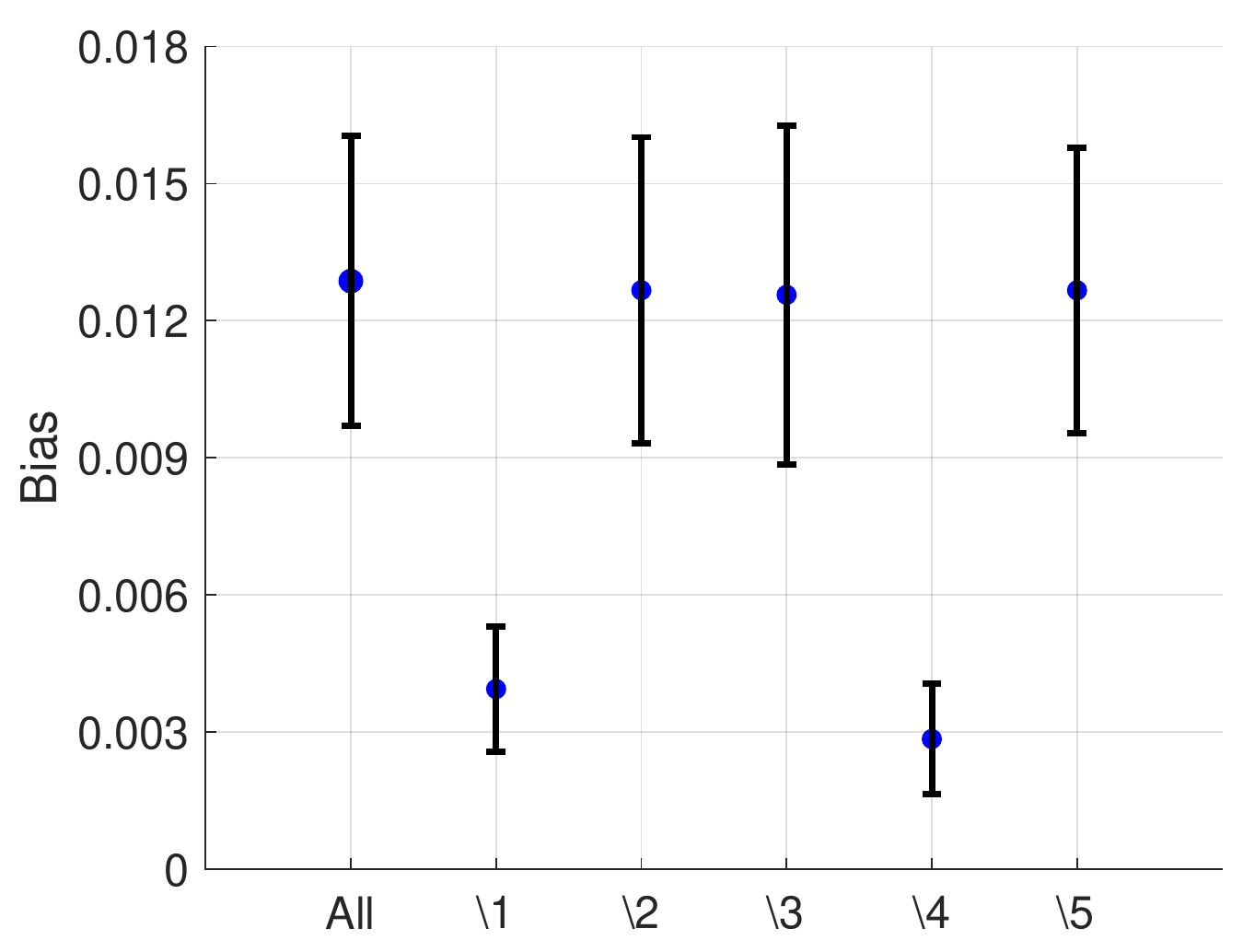}
  \label{fig:sub2}
\end{subfigure}
\caption{a) Error and b) Bias of the classifier over COMPAS dataset when all the features are used, and when one of the features is removed. $\backslash i$ denotes the case where $i$'th feature (as numerated in Table \ref{tab:1}) is removed.}
\label{fig:compasdrop}
\end{figure}

\newpage

\bibliographystyle{plain}   
\bibliography{Refs}

\begin{thebibliography}{10}

\bibitem{alghamdi2020model}
Wael Alghamdi, Shahab Asoodeh, Hao Wang, Flavio~P Calmon, Dennis Wei, and
  Karthikeyan~Natesan Ramamurthy.
\newblock Model projection: {T}heory and applications to fair machine learning.
\newblock In {\em 2020 IEEE International Symposium on Information Theory
  (ISIT)}, pages 2711--2716, 2020.

\bibitem{angwin2016machine}
Julia Angwin, Jeff Larson, Surya Mattu, and Lauren Kirchner.
\newblock Machine bias.
\newblock {\em Pro Publica}, 2016.

\bibitem{barocas2016big}
Solon Barocas and Andrew~D Selbst.
\newblock Big data's disparate impact.
\newblock {\em Cal. L. Rev.}, 104:671, 2016.

\bibitem{bertschinger2014quantifying}
Nils Bertschinger, Johannes Rauh, Eckehard Olbrich, J{\"u}rgen Jost, and Nihat
  Ay.
\newblock Quantifying unique information.
\newblock {\em Entropy}, 16(4):2161--2183, 2014.

\bibitem{calders2010three}
Toon Calders and Sicco Verwer.
\newblock Three naive bayes approaches for discrimination-free classification.
\newblock {\em Data Mining and Knowledge Discovery}, 21(2):277--292, 2010.

\bibitem{calmon2017optimized}
Flavio Calmon, Dennis Wei, Bhanukiran Vinzamuri, Karthikeyan~Natesan
  Ramamurthy, and Kush~R Varshney.
\newblock Optimized pre-processing for discrimination prevention.
\newblock In {\em Advances in Neural Information Processing Systems}, pages
  3992--4001, 2017.

\bibitem{cohen2005feature}
Shay Cohen, Eytan Ruppin, and Gideon Dror.
\newblock Feature selection based on the shapley value.
\newblock In {\em 19th international joint conference on Artificial
  intelligence}, pages 665--670. Morgan Kaufmann Publishers, 2005.

\bibitem{cover2012elements}
Thomas~M Cover and Joy~A Thomas.
\newblock {\em Elements of information theory}.
\newblock John Wiley \& Sons, 2012.

\bibitem{datta2016algorithmic}
Anupam Datta, Shayak Sen, and Yair Zick.
\newblock Algorithmic transparency via quantitative input influence: Theory and
  experiments with learning systems.
\newblock In {\em 2016 IEEE Symposium on Security and Privacy)}, pages
  598--617, 2016.

\bibitem{doshi2017towards}
Finale Doshi-Velez and Been Kim.
\newblock Towards a rigorous science of interpretable machine learning.
\newblock {\em arXiv preprint arXiv:1702.08608}, 2017.

\bibitem{dutta2020information}
Sanghamitra Dutta, Praveen Venkatesh, Piotr Mardziel, Anupam Datta, and Pulkit
  Grover.
\newblock An information-theoretic quantification of discrimination with exempt
  features.
\newblock In {\em Proceedings of the AAAI Conference on Artificial
  Intelligence}, volume~34, pages 3825--3833, 2020.

\bibitem{dwork2012fairness}
Cynthia Dwork, Moritz Hardt, Toniann Pitassi, Omer Reingold, and Richard Zemel.
\newblock Fairness through awareness.
\newblock In {\em Proceedings of the 3rd Innovations in Theoretical Computer
  Science Conference}, pages 214--226. ACM, 2012.

\bibitem{feldman2015certifying}
Michael Feldman, Sorelle~A Friedler, John Moeller, Carlos Scheidegger, and
  Suresh Venkatasubramanian.
\newblock Certifying and removing disparate impact.
\newblock In {\em Proceedings of the 21th ACM SIGKDD International Conference
  on Knowledge Discovery and Data Mining}, pages 259--268. ACM, 2015.

\bibitem{fish2016confidence}
Benjamin Fish, Jeremy Kun, and {\'A}d{\'a}m~D Lelkes.
\newblock A confidence-based approach for balancing fairness and accuracy.
\newblock In {\em Proceedings of the 2016 SIAM International Conference on Data
  Mining}, pages 144--152. SIAM, 2016.

\bibitem{galhotra2020fair}
Sainyam Galhotra, Karthikeyan Shanmugam, Prasanna Sattigeri, and Kush~R
  Varshney.
\newblock Fair data integration.
\newblock {\em arXiv preprint arXiv:2006.06053}, 2020.

\bibitem{ghassami2018fairness}
AmirEmad Ghassami, Sajad Khodadadian, and Negar Kiyavash.
\newblock Fairness in supervised learning: An information theoretic approach.
\newblock In {\em 2018 IEEE International Symposium on Information Theory
  (ISIT)}, pages 176--180. IEEE, 2018.

\bibitem{goodman2017european}
Bryce Goodman and Seth Flaxman.
\newblock European union regulations on algorithmic decision-making and a
  “right to explanation”.
\newblock {\em AI Magazine}, 38(3):50--57, 2017.

\bibitem{grgic2016case}
Nina Grgic-Hlaca, Muhammad~Bilal Zafar, Krishna~P Gummadi, and Adrian Weller.
\newblock The case for process fairness in learning: Feature selection for fair
  decision making.
\newblock In {\em NIPS Symposium on Machine Learning and the Law}, volume~1,
  page~2, 2016.

\bibitem{hajian2013methodology}
Sara Hajian and Josep Domingo-Ferrer.
\newblock A methodology for direct and indirect discrimination prevention in
  data mining.
\newblock {\em IEEE transactions on knowledge and data engineering}, 2013.

\bibitem{hardt2016equality}
Moritz Hardt, Eric Price, Nati Srebro, et~al.
\newblock Equality of opportunity in supervised learning.
\newblock In {\em Advances in Neural Information Processing Systems}, pages
  3315--3323, 2016.

\bibitem{hunt2005redlining}
D~Bradford Hunt.
\newblock Redlining.
\newblock {\em Encyclopedia of Chicago}, 2005.

\bibitem{jiang2020identifying}
Heinrich Jiang and Ofir Nachum.
\newblock Identifying and correcting label bias in machine learning.
\newblock In {\em International Conference on Artificial Intelligence and
  Statistics}, pages 702--712. PMLR, 2020.

\bibitem{joseph2016rawlsian}
Matthew Joseph, Michael Kearns, Jamie Morgenstern, Seth Neel, and Aaron Roth.
\newblock Rawlsian fairness for machine learning.
\newblock {\em arXiv preprint arXiv:1610.09559}, 1(2), 2016.

\bibitem{joseph2016fairness}
Matthew Joseph, Michael Kearns, Jamie~H Morgenstern, and Aaron Roth.
\newblock Fairness in learning: {C}lassic and contextual bandits.
\newblock In {\em Advances in Neural Information Processing Systems}, pages
  325--333, 2016.

\bibitem{kamiran2012data}
Faisal Kamiran and Toon Calders.
\newblock Data preprocessing techniques for classification without
  discrimination.
\newblock {\em Knowledge and Information Systems}, 33(1):1--33, 2012.

\bibitem{kamishima2011fairness}
Toshihiro Kamishima, Shotaro Akaho, and Jun Sakuma.
\newblock Fairness-aware learning through regularization approach.
\newblock In {\em 2011 IEEE 11th International Conference on Data Mining
  Workshops}, pages 643--650. IEEE, 2011.

\bibitem{kazemi2018scalable}
Ehsan Kazemi, Morteza Zadimoghaddam, and Amin Karbasi.
\newblock Scalable deletion-robust submodular maximization: {D}ata
  summarization with privacy and fairness constraints.
\newblock In {\em International Conference on Machine Learning}, pages
  2549--2558, 2018.

\bibitem{kilbertus2017avoiding}
Niki Kilbertus, Mateo~Rojas Carulla, Giambattista Parascandolo, Moritz Hardt,
  Dominik Janzing, and Bernhard Sch{\"o}lkopf.
\newblock Avoiding discrimination through causal reasoning.
\newblock In {\em Advances in Neural Information Processing Systems}, pages
  656--666, 2017.

\bibitem{kinyanjui2019estimating}
Newton~M Kinyanjui, Timothy Odonga, Celia Cintas, Noel~CF Codella, Rameswar
  Panda, Prasanna Sattigeri, and Kush~R Varshney.
\newblock Estimating skin tone and effects on classification performance in
  dermatology datasets.
\newblock {\em arXiv preprint arXiv:1910.13268}, 2019.

\bibitem{kleinberg2016inherent}
Jon Kleinberg, Sendhil Mullainathan, and Manish Raghavan.
\newblock Inherent trade-offs in the fair determination of risk scores.
\newblock {\em arXiv preprint arXiv:1609.05807}, 2016.

\bibitem{kusner2017counterfactual}
Matt~J Kusner, Joshua Loftus, Chris Russell, and Ricardo Silva.
\newblock Counterfactual fairness.
\newblock In {\em Advances in Neural Information Processing Systems}, pages
  4066--4076, 2017.

\bibitem{lundberg2017unified}
Scott Lundberg and Su-In Lee.
\newblock A unified approach to interpreting model predictions.
\newblock {\em arXiv preprint arXiv:1705.07874}, 2017.

\bibitem{mahoney2007method}
John~F Mahoney and James~M Mohen.
\newblock Method and system for loan origination and underwriting, October~23
  2007.
\newblock US Patent 7,287,008.

\bibitem{mase2021cohort}
Masayoshi Mase, Art~B Owen, and Benjamin~B Seiler.
\newblock Cohort shapley value for algorithmic fairness.
\newblock {\em arXiv preprint arXiv:2105.07168}, 2021.

\bibitem{pearl2009causality}
Judea Pearl.
\newblock {\em Causality}.
\newblock Cambridge university press, 2009.

\bibitem{pedreschi2009measuring}
Dino Pedreschi, Salvatore Ruggieri, and Franco Turini.
\newblock Measuring discrimination in socially-sensitive decision records.
\newblock In {\em Proceedings of the SIAM International Conference on Data
  Mining}. SIAM, 2009.

\bibitem{shapley1953value}
Lloyd~S Shapley.
\newblock A value for n-person games.
\newblock {\em Contributions to the Theory of Games}, 2(28):307--317, 1953.

\bibitem{sun2012using}
Xin Sun, Yanheng Liu, Jin Li, Jianqi Zhu, Xuejie Liu, and Huiling Chen.
\newblock Using cooperative game theory to optimize the feature selection
  problem.
\newblock {\em Neurocomputing}, 97:86--93, 2012.

\bibitem{wang2019repairing}
Hao Wang, Berk Ustun, and Flavio Calmon.
\newblock Repairing without retraining: {A}voiding disparate impact with
  counterfactual distributions.
\newblock In {\em International Conference on Machine Learning}, pages
  6618--6627. PMLR, 2019.

\bibitem{wang2018direction}
Hao Wang, Berk Ustun, and Flavio~P Calmon.
\newblock On the direction of discrimination: An information-theoretic analysis
  of disparate impact in machine learning.
\newblock {\em arXiv preprint arXiv:1801.05398}, 2018.

\bibitem{young1985monotonic}
H~Peyton Young.
\newblock Monotonic solutions of cooperative games.
\newblock {\em International Journal of Game Theory}, 14(2):65--72, 1985.

\bibitem{zafar2017fairness}
Muhammad~Bilal Zafar, Isabel Valera, Manuel Gomez~Rodriguez, and Krishna~P
  Gummadi.
\newblock Fairness beyond disparate treatment \& disparate impact: Learning
  classification without disparate mistreatment.
\newblock In {\em Proceedings of the 26th International Conference on World
  Wide Web}, pages 1171--1180. International World Wide Web Conferences
  Steering Committee, 2017.

\bibitem{zafar2015fairness}
Muhammad~Bilal Zafar, Isabel Valera, Manuel~Gomez Rodriguez, and Krishna~P
  Gummadi.
\newblock Fairness constraints: Mechanisms for fair classification.
\newblock {\em arXiv preprint arXiv:1507.05259}, 2015.

\bibitem{zemel2013learning}
Rich Zemel, Yu~Wu, Kevin Swersky, Toni Pitassi, and Cynthia Dwork.
\newblock Learning fair representations.
\newblock In {\em Proceedings of the 30th International Conference on Machine
  Learning (ICML-13)}, pages 325--333, 2013.

\end{thebibliography}

\clearpage

\appendix
\begin{center}
    {\LARGE \bf Supplementary Material}
\end{center}

\section{Proof of Theorem \ref{thm:1}}\label{appendix_thm1}
\textbf{If part:} From \eqref{eq:GM}, $Y$ is a descendant of all the features in the graphical model. Assuming $X_S\subseteq X^n$ is not a direct parent of $Y$ in the graphical model, given $X_{S^c}$, $Y$ and $X_S$ are independent, i.e., $P_{Y|A,X_{S^c},X_{S}} = P_{Y|A,X_{S^c}}$. We have
\begin{align*}
&I(Y;A,X^n) = I(Y;A,X_{S^c}, X_{S}) \\
&= \sum_{y,a,x^n} P_{Y,A,X_{S^c},X_S}(y,a,x_{S^c},x_S)\; \log \frac{P_{Y|A, X_{S^c}, X_S}(y|a,x_{S^c},x_S)}{P_Y(y)}   \\
&= \sum_{y,a,x^n} P_{Y,A, X_{S^c},X_S}(y,a,x_{S^c},x_{S})\;\log \frac{P_{Y|A,X_{S^c}}(y|a,x_{S^c})}{P_Y(y)} \\
&= \sum_{y,a,x_{S^c}} P_{Y,A,X_{S^c}}(y,a,x_{S^c})\log \frac{P_{Y|A,X_{S^c}}(y|a,x_{S^c})}{P_Y(y)}\\ 
&=I(Y;A,X_{S^c})\\
&\qquad \implies v^{Acc}(X_S) = I(Y; A, X_S| X_{S^c}) = 0.
\end{align*}\\ \\
\textbf{Only if part:} Now assume $v^{Acc}(X_S) = 0$. We want to prove that $X_S$ is not a direct parent of $Y$.
\begin{align*}
&v^{Acc}(X_S) = 0 \\
&\implies I(Y;X^n,A)=I(Y; X_{S^c},A)\\
&\implies  \sum_{y,x^n,a} P_{Y,X^n,A}(y,x^n,a)\log \frac{P_{Y|X^n,A}(y|x^n,a)}{P_Y(y)} \\
&\quad = \sum_{y,x_{S^c},a} P_{Y,X_{S^c}, A}(y,x_{S^c},a)\log \frac{P_{Y|X_S^c,A}(y|x_{S^c},a)}{P_Y(y)}\\
&\quad = \sum_{y,x^n,a} P_{Y,X^n,A}(y,x^n,a)\log \frac{P_{Y|X_{S^c},A}(y|x_{S^c},a)}{P_Y(y)}\\
&\implies \sum_{y,x^n,a} P_{Y,X^n,A}(y,x^n,a) \log \frac{P_{Y|X^n,A}(y|x^n,a)}{P_{Y|X_{S^c},A}(y|x_{S^c},a)}=0\\
&\implies \hspace{-3mm} \sum_{a,x^n\in\mathcal{X}^n} \hspace{-2mm} P_{A,X^n}(a,x^n)   D_{KL}(P_{Y|A=a,X^n=x^n}||P_{Y|A=a,X_{S^c} = x_{S^c}})= 0
\end{align*}
Thus, $D_{KL}(P_{Y|A=a,X^n=x^n}||P_{Y|A=a,X_{S^c} = x_{S^c}}) = 0$, for all $a,x^n$, since $D_{KL}(.||.) \geq 0$. Furthermore, according to the Gibbs' inequality, $D_{KL}(P||Q)=0$ if and only if $P=Q$ almost every where. As a result, $P_{Y|A=a,X^n=x^n}=P_{Y|A=a,X_{S^c} = x_{S^c}}$, for all $a$, and $x^n\in\mathcal{X}^n \implies Y\perp X_S| \{A,X_{S^c}\}$. Since we assume that the joint distribution is faithful with respect to the graphical model, we conclude that $X_S$ is not a direct parent of $Y$.

\section{Proof of Proposition \ref{prop:1}}\label{appendix_prop1}
According to \cite{bertschinger2014quantifying}, unique information is monotonic with respect to its third argument, i.e.,
\begin{align*}
S_1&\subseteq S_2 \implies X_{S_1}\subseteq X_{S_2} \\
&\implies UI(Y;A\setminus X_{S_1}) \geq UI(Y;A\setminus X_{S_2}).
\end{align*}
In addition, we have:
\[
SI(Y;A,X_S) = I(Y;A) - UI(Y;A\setminus X_S)
\]
As a result:
\[
S_1\subseteq S_2 \implies SI(Y;A,X_{S_1}) \leq SI(Y;A,X_{S_2}).
\]
The proof of non-negativity of shared information can be found in \cite{bertschinger2014quantifying}. Further, both mutual information $I(A;X_S)$, and conditional mutual information $I(A;X_S|Y)$ are non-negative and monotonic with respect to $X_S$ \cite{cover2012elements}. Since all three terms $SI(Y;X_S,A)$, $I(A;X_S)$, and $I(A;X_S|Y)$ are non-negative and monotonic in $X_S$, their multiplication is also non-negative and monotonic in $X_S$.

\section{Proof of Proposition \ref{prop:2}}\label{appendix_prop2}
$X_S\independent A$ and $X_S\independent A|Y$ imply that $I(X_S;A)=0$ and $I(X_S;A|Y) = 0$, respectively, see \cite{cover2012elements}. Furthermore, as shown in \cite[Lemma 12]{bertschinger2014quantifying}: 
\[
SI(Y;X_S,A) \leq I(Y;X_S).
\]
As a result,
\begin{align*}
Y\independent X_S \implies & SI(Y;X_S,A) = 0 \\
\implies & v^D(X_S) = 0.    
\end{align*}

\section{Proof of Theorem \ref{thm: 2}}\label{appendix_thm2}
According to \cite[Lemma~13]{bertschinger2014quantifying}, for a path-discriminatory subset of features $X_S$, we have $SI(Y;X_S,A) = I(Y;A)$. 

According to \cite[Lemma~12]{bertschinger2014quantifying}, for any $X_S\subseteq X^n$, $SI(Y;X_S,A) \leq I(Y;A)$. Thus, the discrimination coefficient of a path-discriminatory subset of features is an upper bound of the discrimination coefficient of all other subsets of features, and hence Theorem \ref{thm: 2} follows.

\section{Proof of Corollary \ref{cor:1}}\label{appendix_cor1}
The accuracy and discrimination coefficients $v^{Acc}(.)$ and $v^{D}(.)$ both are non-negative and monotonic, see Proposition \ref{prop:1}.
The marginal accuracy and discrimination coefficients $\phi_i^{Acc}$ and $\phi_i^D$, $i\in[n]$, are the summation of terms on the form $v^{Acc/D}(X_S \cup \{X_i\}) - v^{Acc/D}(X_S)$, which are non-negative due to the monotonicity and non-negativity of $v^{Acc}$ and $v^{D}$. 

\section{Proof of Corollary \ref{cor:acc}} \label{appendix_cor2}
For the marginal accuracy coefficient $\phi_i^{Acc}$, $i\in[n]$, we have
\begin{align*}
&\phi_i^{Acc}=\sum_{X_S\subseteq X^n_{- i}} w(X_S) \left[v^{Acc}(X_S \cup \{X_i\}) - v^{Acc} (X_S)\right],
\end{align*}
where $w(X_S)=\frac{|X_S|!(n-|X_S|-1)!}{n!}$ and $X^n_{- i} = X^n \setminus \{X_i\}$. Recall, $v^{Acc}(\cdot)$ is defined in \eqref{eq:vAcc}. We thus have 
\begin{align*}
\phi_i^{Acc}&= \sum_{X_S\subseteq X^n_{-i}}  w(X_S) \Big[ I\left(Y; X_S\cup\{X_i\} \big| A,(X_{S^c} \setminus X_i )\right)- (I(Y;X_{S}|A,X_{S^c})\Big]\\
&=  \sum_{X_S\subseteq X^n_{-i}}  w(X_S) \Big[I(Y; A, X^n) - I\left(Y; A, X_{S^c}\setminus \{X_i\}\right)- I(Y; A, X^n) + I(Y; A, X_{S^c})\Big]\\
&= \sum_{X_S\subseteq X^n_{-i}}  w(X_S) \Big[I(Y; A, X_{S^c}) - I\left(Y; A, X_{S^c}\setminus \{X_i\}\right)\Big]\\
&= \sum_{X_S\subseteq X^n_{-i}}  w(X_S) \Big[I(Y; A, X_S \cup \{X_{i}\}) - I\left(Y; A, X_S\right)\Big].
\end{align*}
The last equality follows by applying a change of variables, where we replace $(X_{S^c}\setminus \{X_i\})$ by $X_S$, and thus $X_{S^c}$ is replaced by $X_S \cup \{X_i\}$.

Suppose $i\in [n]\setminus a$.
For $X_S\subseteq X^n\setminus \{X_i,X_a\}$, we have: 
\begin{align*}
I(Y;A,X_S\cup\{X_a\}) &= I(Y;X_a) + I(Y;A,X_S|X_a) \\
&= I(Y;X_a),    
\end{align*}
where the last equality follows from the fact that $X_a$ is the only parent of $Y$, and hence $I(Y;A,X_S|X_a)=0$. Therefore, for any $X_S$ such that $X_a\in X_S$, we have
\begin{align*}
&I(Y;A,X_S\cup\{X_i\}) =  I(Y;A,X_S) = I(Y;X_a) \\
&\implies I(Y;A,X_S\cup\{X_i\}) -  I(Y;A,X_S) = 0.
\end{align*}
As a result, $\phi_a^{Acc} \geq \phi_i^{Acc}, \forall i\in [n]\setminus a$.

\section{Proof of Corollary \ref{cor:allayp}}\label{appendix_cor3}
Suppose that $X_d$ satisfies the condition of Corollary \ref{cor:allayp}:  $X_d$ is a single child of $A$, and $A\independent (Y\cup \{X^n \setminus X_d\})|X_d$. Also, suppose that $i\in [n]\setminus d$. 

For $X_S\subseteq X^n \setminus \{X_i\}$, if $X_d\in X_S$, we have $v^D(X_S) = v^D(X_S \cup \{X_i\}) = I(Y;A) I(A;X_d) I(A;X_d|Y)\implies v^D(S\cup \{X_i\}) - v^D(X_S) = 0$. In addition, if $X_d\notin X_S$, due to the Theorem \ref{thm: 2} and data processing inequality \cite{cover2012elements}, we have $v^D(X_S\cup \{X_i\})- v^D(X_S) \leq v^D(X_S\cup \{X_d\})- v^D(X_S)$. 

As in Appendix \ref{appendix_cor2}, let $w(X_S)=\frac{|X_S|!(n-|X_S|-1)!}{n!}$. For all $i \in [n]\setminus d$, we have
\begin{align*}
\phi^D_i &=\sum_{X_S\subseteq X^n\setminus\{X_i\}} w(X_S)\;(v^D(X_S\cup \{X_i\}) - v^D(X_S)) \\
&\leq \sum_{X_S\subseteq X^n \setminus\{X_d\}} w (X_S)\;(v^D(X_S\cup \{X_k\}) - v^D(X_S)) \\
&= \phi^D_d.
\end{align*}

\end{document}